\journal{Pattern Recognition}
\newcommand{\mR}{\mathbb{R}}
\newtheorem{theorem}{Theorem}[section]
\newtheorem{lemma}[theorem]{Lemma}
\theoremstyle{definition}
\newtheorem{definition}[theorem]{Definition}
\newtheorem{pro}[theorem]{Problem}
\newtheorem{proposition}[theorem]{Proposition}
\theoremstyle{remark}
\numberwithin{equation}{section}
\newcommand{\ABS}[1]{\left| {#1} \right|}
\newcommand{\NORM}[1]{\left\| {#1} \right\|}
\newcommand{\bA}{\boldsymbol{A}}
\newcommand{\bB}{\boldsymbol{B}}
\newcommand{\bz}{\boldsymbol{z}}
\newcommand{\bx}{\boldsymbol{x}}
\newcommand{\bw}{\boldsymbol{\omega}}
\newcommand{\bq}{\boldsymbol{q}}
\newcommand{\bp}{\boldsymbol{p}}
\newcommand{\bal}{\boldsymbol{\alpha}}
\newcommand{\bSi}{\boldsymbol{\Sigma}}
\newcommand{\bomega}{\boldsymbol{\omega}}
\newcommand{\bv}{\boldsymbol{v}}
\begin{document}

\begin{frontmatter}

%% Title, authors and addresses

%% use the tnoteref command within \title for footnotes;
%% use the tnotetext command for theassociated footnote;
%% use the fnref command within \author or \affiliation for footnotes;
%% use the fntext command for theassociated footnote;
%% use the corref command within \author for corresponding author footnotes;
%% use the cortext command for theassociated footnote;
%% use the ead command for the email address,
%% and the form \ead[url] for the home page:
%% \title{Title\tnoteref{label1}}
%% \tnotetext[label1]{}
%% \author{Name\corref{cor1}\fnref{label2}}
%% \ead{email address}
%% \ead[url]{home page}
%% \fntext[label2]{}
%% \cortext[cor1]{}
%% \affiliation{organization={},
%%             addressline={},
%%             city={},
%%             postcode={},
%%             state={},
%%             country={}}
%% \fntext[label3]{}

\title{{Quaternionic Reweighted Amplitude Flow for Phase Retrieval in Image Reconstruction}} %% Article title

%% use optional labels to link authors explicitly to addresses:
%% \author[label1,label2]{}
%% \affiliation[label1]{organization={},
%%             addressline={},
%%             city={},
%%             postcode={},
%%             state={},
%%             country={}}
%%
%% \affiliation[label2]{organization={},
%%             addressline={},
%%             city={},
%%             postcode={},
%%             state={},
%%             country={}}

\author{Ren Hu} %% Author name

%% Author affiliation
\affiliation{organization={Department of Electronics and Information Systems, Ghent University},%Department and Organization
            addressline={Krijgslaan 281, Building S8}, 
            city={Gent},
            postcode={9000}, 
            state={Gent},
            country={Belgium}}

\author{Pan Lian} %% Author name

%% Author affiliation
\affiliation{organization={School of Mathematical Sciences, Tianjin Normal University},%Department and Organization
	addressline={Binshui No.393, Xiqing District}, 
	city={Tianjin},
	postcode={300387}, 
	state={Tianjin},
	country={China}}

%% Abstract
\begin{abstract}
%% Text of abstract
Quaternionic signal processing provides powerful tools for efficiently managing color signals by preserving the intrinsic correlations among signal dimensions through quaternion algebra. In this paper, we address the quaternionic phase retrieval problem by systematically developing novel algorithms based on an amplitude-based model. Specifically, we propose the Quaternionic Reweighted Amplitude Flow (QRAF) algorithm, which is further enhanced by three of its variants: incremental, accelerated, and adapted QRAF algorithms. In addition, we introduce the Quaternionic Perturbed Amplitude Flow (QPAF) algorithm, which has linear convergence. Extensive numerical experiments on  both synthetic data and real images, demonstrate that our proposed methods significantly improve recovery performance and computational efficiency compared to state-of-the-art approaches.
\end{abstract}

\begin{keyword}
%% keywords here, in the form: keyword \sep keyword
%% PACS codes here, in the form: \PACS code \sep code
%% MSC codes here, in the form: \MSC code \sep code
%% or \MSC[2008] code \sep code (2000 is the default)
Phase retrieval\sep quaternion\sep color image processing\sep reweighted amplitude flow algorithm

\end{keyword}

\end{frontmatter}

%% Add \usepackage{lineno} before \begin{document} and uncomment 
%% following line to enable line numbers
%% \linenumbers

%% main text
%%
\section{{Introduction}}

Hypercomplex algebras, such as quaternions, octonions, and Clifford algebras, have demonstrated considerable utility in the representation and processing of multidimensional signals. These algebraic systems extend the capabilities of conventional vector spaces by enabling not only scalar multiplication and addition, but also non-commutative multiplication, allowing for compact and structured representations of color, geometric, and directional information. In particular, quaternion algebra has become increasingly important in color image processing \cite{ldp}, signal fusion \cite{MFLC} and neural networks \cite{pml}, as it naturally encodes RGB channels in a single algebraic object, thereby preserving inter-channel correlations \cite{MFLC}.

In recent years, quaternion-based methods have been increasingly applied to ill-posed inverse problems in imaging. Notably, a range of quaternion models have been introduced for color image denoising and inpainting. Quaternion Matrix Completion (QMC) \cite{QMC} exploits the low-rank structure of quaternion-encoded images for recovering missing pixels; Nonlocal Self-Similarity QMC (NSS-QMC) \cite{NSSQMC} extends this by leveraging patch-based nonlocal priors to recover fine texture details; Saturation-Value Total Variation (SVTV) \cite{SVTV} applies a perceptual total variation regularization in the HSV color space, preserving color edges more faithfully; and Cross-Space Total Variation with Quaternion Blur Operator (CSTV-QBO) \cite{CSTV} introduces a novel model that simultaneously enforces TV constraints in multiple color spaces and models inter-channel blur using quaternion convolution. These methods demonstrate that quaternionic modeling not only enhances color consistency but also enables structure-preserving priors that are difficult to formulate in real-valued domains.

Inspired by this progress, researchers have recently begun extending classical phase retrieval (PR) to hypercomplex settings, see e.g., \cite{jms}. In the conventional PR problem, the objective is to recover a signal $\boldsymbol{x}$ from phaseless measurements of the form $b_j = |\langle \bal_j, \boldsymbol{x} \rangle|$, see e.g. \cite{seccms}. This inverse problem is fundamentally nonconvex and arises in various imaging applications such as optics, crystallography, and diffraction imaging \cite{lt}. Numerous algorithms have been proposed in the real and complex domains, including convex semi-definite relaxations \cite{csv}, optimization-based method such as weighted nuclear norm minimization (WNNM) method \cite{LI2022108537}, gradient-based methods such as Wirtinger Flow (WF) \cite{candes_phase_2015_Theory}, and amplitude-based refinements such as Truncated Amplitude Flow (TAF) and Reweighted Amplitude Flow (RAF) \cite{wang_phase_2018}.

However, extending these methods to the quaternionic domain introduces new challenges that cannot be trivially addressed. Unlike the complex field, the quaternion algebra is non-commutative, which complicates the gradient-based analysis. Moreover, many foundational tools—such as Wirtinger calculus and spectral theorem — require significant adaptation. %In particular, spectral-method-based initializations differ substantially between real/complex and quaternionic algorithms, a contrast that is clearly visible in the accompanying image.
%The few existing works on quaternion phase retrieval (QPR) have made 
Significant initial progress on quaternion phase retrieval (QPR) has been made by Chen and Ng in \cite{chen_phase_2023}. They proposed a Quaternion Wirtinger Flow (QWF) algorithm using a generalized $\mathbb{HR}$ calculus developed in \cite{xup}, while the subsequent studies proposed Quaternion Truncated Wirtinger Flow (QTWF) and Quaternion Truncated Amplitude Flow (QTAF), which showed empirical improvements. More complicated Octonic phase retrieval was considered in \cite{jmsa}.
Nevertheless, these methods primarily focused on intensity-based models. %Additionally, they are often restricted to pure quaternion signals and do not generalize easily to complex-valued or full quaternionic inputs. More recently, efforts have been made to generalize real-valued amplitude flow models (e.g., RAF) into hypercomplex settings, but a principled treatment that addresses noncommutativity and initialization remains underdeveloped.

\subsection{{Our Contributions}}

In this work, we consider the quaternion phase retrieval problem and systemically investigate the quaternion non-convex phase retrieval algorithms based on the quaternionic  amplitude-based model. Note that one key advantage of quaternionic methods is their ability to recover signals with substantially fewer measurements compared to real-valued methods based on monochromatic or concatenation models. Our algorithms are not exception. One of the primary difficulties in quaternionic phase retrieval stems from the non-commutative nature of quaternion multiplication, which complicates both algorithm design and theoretical analysis. Our proposed algorithms are designed to directly confront these issues. In particular, we leverage the quaternionic algebraic structure—rather than  analyze it via real or complex reformulations—to develop efficient and theoretically motivated methods, thereby preserving inter-channel correlations. Our contributions are two fold:

(i) We introduce the Reweighted Amplitude Flow algorithm for quaternion-valued signals (QRAF), which extends the Real/Complex Amplitude Flow (RAF) algorithm originally proposed in \cite{wang_solving_2018}. The non-commutativity of quaternion multiplication plays a central role in the algorithm's design, necessitating careful ordering of operations to preserve correctness and convergence.   Numerical experiments presented in Section \ref{p7} show that QRAF algorithm consistently outperforms existing methods, such as QWF, QTWF and QTAF \cite{chen_phase_2023}.  Notably, both the Quaternionic Reshaped Wirtinger Flow (QRWF) and Quaternionic Truncated Amplitude Flow (QTAF) can be seen  as special cases of QRAF.  Furthermore, three variants of QRAF based on the gradient decent are introduced, which significantly enhance its performance. A detailed discussion for the convergence of QRAF is given, concluding with an open question which will be solved  the corresponding complex-valued case is fully understood.

(ii) We also propose the Quaternionic Perturbed Amplitude Flow (QPAF). The QPAF algorithm needn't any truncation or re-weighted procedure, yet it achieves  comparable numerical performance. {Importantly, unlike the algorithms based on amplitude-based model}, the theoretical analysis of PAF is straightforward to extend to the present quaternionic setting. making it a promising alternative in practice.

{Since any quaternionic matrix can be represented using real or complex matrices, a natural and frequently asked question is that if our algorithms are merely rebranded versions of existing real/complex methods. In the context of quaternionic phase retrieval, this is not the case.  Indeed, it is not possible to recover a signal $\boldsymbol{x}$ from phaseless quaternionic measurements of the form $b_j = |\langle \bal_j, \boldsymbol{x} \rangle|$ using real or complex analogues alone. Our algorithms are intrinsically governed by the quaternionic algebraic structure. Even when implemented via real or complex matrix representations, the underlying quaternionic behavior remains critical to their performance and theoretical soundness.}

{\em Paper Organization and Notations} \,The rest of this paper is organized as follows. Section \ref{p2}  introduces the necessary  preliminaries.  In Section \ref{p3}, we present the QRAF algorithm, and provide a discussion for its  convergence. Section \ref{p4} explores three variants that further refine the QRAF algorithm. Section \ref{p5} focuses on the Quaternionic Perturbed Amplitude-based model and the non-convex QPAF algorithm. In Section \ref{p6}, we review a useful technique—phase factor estimation—for color image processing. Experimental results on both synthetic and real-world color image data are discussed in Section \ref{p7}.  Concluding remarks are given in Section \ref{p8} and future works are discussed in \ref{p10}. Throughout the paper, boldface lowercase letters such as $\bal_{i}, \boldsymbol{x}, \boldsymbol{z}$ denote vectors, and
boldface capital letters such as $\boldsymbol{A}, \boldsymbol{Y}$ denote matrices. For a quaternionic matrix and a vector in $\mathbb{H}^{d}$, while $\bA^*$ and $\bz^*$ denote conjugate
transposes of $\bA$ and $\bz$, respectively. For a matrix and a vector, $\bA^T$ and $\bz^T$ denote
transposes of $\bA$ and $\bz$, respectively. %We let $\odot$  denotes element-wise product. 
%The indicator
%function $1_{\bA} = 1$ if the event $\bA$ occurs, and $1_{\bA} = 0$ otherwise. We let $\|\bx\|_{1}$ and $\|\bx\|$ denote
%the $l_1$ norm and $l_2$ norm of a vector $\bx$, respectively.
%Moreover, let $\|\bA\|_{F}$ and $\|\bA\|$ denote
%the Frobenius norm and the spectral norm of a matrix $\bA$, respectively. %We note that the
%constants $c,C, c_0, c_1, c_2$ may be different in different equations, for the sake of notational
%simplicity.
\section{Preliminaries} \label{p2}
\subsection{Quaternionic Matrices}
In mathematics, Hamilton's quaternion algebra, denoted by $\mathbb{H}$, extends the  familiar real and complex number fields.  A quaternion $q\in \mathbb{H}$  is typically represented as  
$
	q=q_a+ q_b i+ q_cj+ q_d k,
$
where $q_a, q_b, q_c, q_d\in \mathbb{R}$ and $i, j, k$ are  generalized imaginary units satisfy the relations $i^{2}=j^{2}=k^{2}=-1$ and $ij=-ji=k, jk=-kj=i, ki=-ik=j$. In this expression, the term $ q_bi+ q_cj+ q_d k$ is called the vector part of $q$, while $q_a$ is referred to as the  real  or scalar part. The conjugate of $q$ is defined as $\bar{q}=q_a- q_bi-q_cj-q_dk$. For any two quaternions $p$ and $q$, then $\overline{pq}=\bar{q} \bar{p}$, which is in general  not equal to  $\bar{p} \bar{q}$ again due to the non-commutativity of quaternion multiplication. The Euclidean norm of $q$ is given by
$
	|q|=\sqrt{q\bar{q}}=\sqrt{q_{a}^{2}+q_{b}^{2}+q_{c}^{2}+q_{d}^{2}}.
$

Let $\mathbb{H}^{d}$ denote the sets of $d$-dimensional quaternion vectors,  and  $\mathbb{H}^{d_{1}\times d_{2}}$  the sets of  $d_{1}\times d_{2} $  matrices with quaternionic entries. For  ${\bq}=[q_{k}]\in \mathbb{H}^{d}$ and the matrix ${\bA}=[q_{ij}]\in \mathbb{H}^{d_{1}\times d_{2}}$, the $\ell_{2}$ norm of ${\bq}$  is defined as  
$\|{\bq}\|=(\sum_{k=1}^{d}|q_{k}|^{2})^{1/2}$ and the matrix operator norm  of $\bA$ is given by 
$\|\bA\|=\sup_{\bomega\in \mathbb{H}^{d_{2}}\backslash \{0\} }\|{\bA}\bomega\|/\|\bomega\| $. Let ${\bf I}_{d}$
be the identity matrix. Similar to  real and complex matrices, a matrix ${\bA} \in \mathbb{H}^{d\times d}$ is called invertible if there  exists a matrix  ${\bB}$ such that ${\bA\bB}={\bB\bA}={\bf I}_{d}$. A matrix ${\bA} \in \mathbb{H}^{d\times d} $ is called Hermitian if ${\bA}^*={\bA}$,  and  unitary if ${\bA\bA^{*}}={\bA^{*}\bA={\bf I}_{d}}$, where ${\bA}^*$ denotes the conjugation transpose of ${\bA}$.

The eigenvalue and eigenvector theory of a quaternion matrix is more complicated than those for real or complex matrices due to the non-commutativity. As a result,  one can  consider the left and right eigenvalue equations separately. In this work, we focus on the right eigenvalue and eigenvector which has its physical significance.
Given ${\bA}\in \mathbb{H}^{d\times d}$, if ${\bA}{\bf x}={\bx}\lambda $ for some nonzero ${\bx} \in \mathbb{H}^{d}$, we refer $\lambda$, ${\bx}$ as the right eigenvalue and eigenvector of ${\bA}$. Note that ${\bA \bx}={\bx}\lambda$ is equal to ${\bA}( {\bx} v^{*})=(\bx)v^{*}(v\lambda v^{*})$ for any $v$  with modulus 1. Therefore, a matrix ${\bA}$ with eigenvalue $\lambda$ has a set of eigenvalues $\{v\lambda v^{*}\}$, from which we can select
a unique `standard eigenvalue' in the form of $a+bi$ with  $a\in\mathbb{R}$ and $ b\ge 0)$.  Any ${\bA}\in \mathbb{H}^{d\times d}$ has exactly $d$ standard eigenvalues, and in particular, all standard eigenvalues of Hermitian ${\bA}$
are real. Similar to complex Hermitian matrices, a quaternion Hermitian matrix ${\bA}$ can be decomposed as ${\bA}=\boldsymbol{U} \bSi \boldsymbol{U}^{*}$, where ${\boldsymbol{U} }$ is a  unitary matrix  and  ${\bSi}$ is diagonal, with the standard eigenvalues of ${\bA}$ arranged  in the diagonal of ${\bSi}$. The detailed proof of the results mentioned above of  quaternionic matrices  can be found in \cite{zhang}.
\subsection{Dirac Operator and Generalized $\mathbb{HR}$ Calculus}

In this subsection, we review the fundamentals of quaternion matrix derivatives. This framework is essential for developing gradient-descent-like iterations. It provides a comprehensive set of rules that enable the computation of derivatives for functions directly within the quaternion domain,  serving as an analogue to the Wirtinger calculus. %It eliminates  the need to convert the quaternionic optimization problem into an equivalent real-domain formulation. 
For any quaternions $q$, consider the transformation
$
	q^{\mu}:=\mu q\mu^{-1}
$
where $\mu$ is any non-zero quaternion, which represents a $3$-dimensional rotation of the vector part of $q$.  The set $\{1, i^{\mu}, j^{\mu}, k^{\mu}\}$ forms a generalized orthogonal basis for $\mathbb{H}$. Similarly, for any quaternion vector $\bq\in \mathbb{H}^{n}$, we define its transformation as $\bq^{\mu}=\left(q_{1}^{\mu}, q_{2}^{\mu}, \cdots, q_{n}^{\mu}\right)$. The generalized $\mathbb{HR}$ calculus can be derived either from  left or right GHR derivatives. In this work, we will only use the left one.  Note that in mathematical literature (see e.g., \cite{dss}), the  derivative used here is often referred  as the right derivative.
\begin{definition} \cite{xup} The generalized $\mathbb{HR}$ derivative of a function $f$ with respect to the transformed quaternion $q^{\mu}$ is defined as 
	{\begin{equation*}
		\frac{\partial f}{\partial q^{\mu}}
		=\frac{1}{4}\left(\frac{\partial f}{\partial q_a }-\frac{\partial f}{\partial q_b}i^{\mu} - \frac{\partial f}{\partial q_c} j^{\mu}-\frac{\partial f}{\partial q_d}  k^{\mu}\right),
	\end{equation*} }
	where $\partial f/\partial q_{a}$, $\partial f/\partial q_{b}$, $\partial f/\partial q_{c}$, and $\partial f/\partial q_{d}$ are the partial derivatives of $f$ with respect to $q_{a}$, $q_{b}$, $q_{c}$, and $q_{d}$ respectively.
\end{definition}
%\begin{remark} The left and right GHR derivative operators are variants of the celebrated Dirac operators, which are widely studied in mathematical and physical literature, see e.g., \textcolor{blue}{\cite{dss, dxu}. }  
%\end{remark}

\begin{definition}
	For a scalar function $f({\bq})$ with ${\bq}\in \mathbb{H}^{n}$,
	the gradient of $f$ with respect to ${\bq}$ is defined as 
	\begin{equation} \label{gedr}
		\nabla_{{\bq}^{\mu}}f=\left( \frac{\partial f}{\partial \bq^{\mu}}\right)^{*} \in \mathbb{H}^{n},
	\end{equation}
	where $\frac{\partial f}{\partial {\bq}^{\mu}}=
	\left[ \frac{\partial f}{\partial q_{1}^{\mu}}, \ldots, 
	\frac{\partial f}{\partial q_{n}^{\mu}}\right]$.
	For simplicity, we will write $\nabla_{\bq}f$ as $\nabla f$ in the subsequent sections.
\end{definition}
%In particular, for a scalar function $f$, the gradient used in this work is given by 
%\begin{equation*}
%  \nabla_{{\bq}}f=\frac{1}{4} \left( \frac{\partial f}{ \partial q_{a}}+ \frac{\partial f}{ \partial q_{b}}i +\frac{\partial f}{ \partial q_{c}}j+
%  \frac{\partial f}{ \partial q_{d}} k\right) 
%\end{equation*}

It can be seen that {$\nabla_{\boldsymbol{\bar{q}}} f(\bq)$} represents the direction of the steepest ascent of the scalar-valued function $f(\bq)$, indicating the direction of maximum rate of change, {see e.g., \cite{dxu}}.

\section{Quaternionic Reweighted Amplitude Flow }\label{p3}

The main goal of this section is to design the  Reweighted Amplitude Flow algorithm for quaternion-valued signals (QRAF),  which  extends the Real/Complex Amplitude Flow (RAF) algorithm originally introduced in \cite{wang_solving_2018}. We organize the algorithm retaining the structure of the original RAF on purpose, however, it should keep in mind that the quaternion framework  introduces {\em non-commutativity}. We consider the quaternionic Gaussian measurement ensemble where the 
entries of ${\bA}$ are i.i.d. drawn from 
$
		\mathscr{N}_{\mathbb{H}}=\frac{1}{2}\left(\mathscr{N}(0,1)
		+\mathscr{N}(0,1)i + \mathscr{N}(0,1)j +\mathscr{N}(0,1)k \right),
$
denoted by ${\bA}\sim \mathscr{N}_{\mathbb{H}}^{n\times d}$, yielding $\mathbb{E}(\bal_{k} \bal_{k}^{*} )={\bf I}_{d}$.

\subsection{Algorithm}

The QRAF algorithm  is based on    {\bf the amplitude flow model}:
\begin{equation} \label{m2}
	\min_{\bz\in \mathbb{H}^{\,d}} F(\bz)=\frac{1}{n}
	\sum_{j=1}^{n} \left(|\langle \bal_{j}, \bz\rangle|-\psi_{j} \right)^{2},
\end{equation}   
where $\langle \bal_{j}, \bz\rangle=\bal_{j}^{*}\,\bz$ represents the quaternionic inner product, and $\psi_{j}=|\langle \bal_{j}, \bx\rangle|$ denotes the modulus of the quaternionic inner product between the known design vector $\bal_j$ and the unknown solution $\bx$. We assume that $\|\bx\|=1$. QRAF  begins with a reweighted initialization procedure, and subsequently refines the initial estimate $\bz_{0}$ through a quaternion-based gradient descent.
\subsubsection{Quaternionic Weighted Maximal Correlation Initialization}
%{\bf A. Quaternionic Weighted Maximal Correlation Initialization}
The importance of selecting an effective starting point is well-recognized for non-convex iterative algorithms in achieving  global optimum. QRAF is not an exception. The quaternionic weighted maximal correlation initialization consists of two steps. First, the norm of the true signal $\bx$ is  estimated easily as:
\begin{equation} \label{ns1}
	\frac{1}{n}\sum_{j=1}^{n}\psi_{j}^{2}=\frac{1}{n}\sum_{j=1}^{n}\left|\langle \bal_{j}, \|\bx\|\boldsymbol{e}_{1} \rangle\right|^{2}
	%=\frac{1}{n}\sum_{j=1}^{n}|\alpha_{j,1}|^{2}\|\bx\|^{2} 
	\approx \|\bx\|^{2}.
\end{equation}
Next, the direction of the quaternion signal ${\bx}$ is estimated using a flexible weighting regularization technique  that balances the informative content derived from the selected data. As in the Real/Complex cases,  larger $\psi_i$ values suggest stronger a correlation between $\bal_i$ and $\bx$, indicating that  $\bal_i$ contains valuable directional information about $\bx$. 

More precisely, we sort the correlation coefficients $\{\psi_j\}_{1\le j\le n}$ in ascending order as $0 < \psi_{[n]} \le \ldots \le \psi_{[2]} \le \psi_{[1]}$. Let $\mathcal{S} \subset \mathcal{M}$ denote the set of selected feature vectors $\bal_j$  used for the initialization. The cardinality $|\mathcal{S}|$ is pre-defined as an integer on the order of $n$, e.g., $|\mathcal{S}| := \lfloor 3n/13\rfloor$. The set $\mathcal{S}$  is defined as the set  of $\bal_j$ vectors corresponding to the largest $|\mathcal{S}|$ correlation coefficients ${\psi_{[j]}}_{1\le j\le |\mathcal{S}|}$, each  approximately indicating in the direction of $\bx$. 
It is reasonable to assume that if $\psi_{i}$ is larger than $\psi_{j}$, then $\bal_{i}$ is more correlated with $\bx$ than $\bal_{j}$ is, hence providing more useful information regarding  the true direction of $\bx$. This motivates the assignment of higher weights to the selected $\bal_{i}$ vectors corresponding to larger $\psi_{i}$ values.

Approximating the direction of $\bx$ thus reduces to find a vector that maximizes its correlation with the subset $\mathcal{S}$
of selected directional vectors $\bal_j$. The desired approximation vector is again efficiently found by solving: 
\begin{equation*}
	\begin{split}
		\max_{\|\bz\|=1}\frac{1}{|\mathcal{S}|} \sum_{j\in \mathcal{S}}\omega_{j}^{(0)}|\langle \bal_{j}, \bz\rangle|^{2}
		= \bz^{*}\left(\frac{1}{|\mathcal{S}|} \sum_{j\in \mathcal{S}}\omega_{j}^{(0)} \bal_{j}\bal_{j}^{*} \right)\bz.
	\end{split}
\end{equation*}
Here $\omega_{j}^{(0)}:=\psi_{j}^{\gamma}$, and $\gamma$ is a carefully chosen parameter. By default, we set $\gamma=1/2$ in the reported numerical implementations.  This, combined with the norm estimate \eqref{ns1} to match the magnitude of $\bx$,  provides the initialization.

\subsubsection{Quaternionic Adaptively Reweighted Gradient Flow} 

The guessed initialization vector $\bz_{0}$ is refined by an adaptively  reweighted gradient descent.  The  new weighted quaternionic gradient  is given by
\begin{equation}\label{mo1}
	\nabla \ell_{\rm rw} (\bz) = \frac{1}{n} \sum_{k = 1}^{n} \omega_{k}^{(t)} \left( 1 - \frac{\psi_{k}}{\ABS{\bal_{k}^{*} \bz}}  \right) \bal_{k}\bal_{k}^{*}\bz,
\end{equation}
where the adaptive  weights are defined by $\omega_{k}^{(t)}=1/\left(1+\beta/\left(|\bal_{k}^{*}\bz|/|\bal_{k}^{*}\bx|\right)\right)$, for $1\le k\le n,$
%\begin{equation*}
%	\omega_{k}^{(t)}=\frac{1}{1+\beta/\left(|\bal_{k}^{*}\bz|/|\bal_{k}^{*}\bx|\right)}, \qquad 1\le k\le n,
%\end{equation*} 
in which the dependence on the iterate index $t$ is ignored for notational brevity.
It should be keep in mind the multiplication order in the factor $\bal_{k}\bal_{k}^{*}\bz$ in \eqref{mo1} is crucial, otherwise the algorithm may fail.

The idea behind  introducing this quaternionic gradient is as same as it in the  conventional RAF in \cite{wang_solving_2018}, which is to differentiate  the contributions of various gradients  to the overall search direction. A straightforward approach  is thus to assign large weights to more reliable gradients and smaller weights to the less reliable ones.  Therefore, the gradient is designed based on the ratio  $\left|\bal_{i}^{*}\bz /\bal_{i}^{*}\bx\right|$, which serves as a confidence score reflecting the reliability of  the corresponding gradient and potentially indicating directions that lead to the true $\bx$.
Taking a suitable step size $\eta$, the update rule is given by 
$\bz_{i+1}=\bz_{i} - \eta \cdot \nabla \ell_{\rm rw}(\bz_{i})$.

In summary, the QRAF algorithm, combining the initialization and gradient flow processes,  is outlined in Algorithm \ref{Alg_QRAF}.

\begin{algorithm}[!ht]
	\caption{Quaternionic Reweighted Amplitude Flow (QRAF)}
	\label{Alg_QRAF}
	\begin{algorithmic}[1] % 控制是否有序号
		\REQUIRE Data $\left(\bal_{k}, \psi_{k}=\ABS{\bal_{k}^{*}\bx}\right)_{k=1}^{n}$, step size $\eta$, weighting parameters $\beta$, subset cardinality $\ABS{\mathcal{S}}$, exponent $\gamma$ and the iteration number $T$; % input 的内容		
		\STATE Let the set $\mathcal{S}$ contains the indices of $\ABS{\mathcal{S}}$ largest entries in $\psi_{k}$, $k=1,2,\ldots n$.
		\STATE Construct the quaternionic Hermitian matrix
		\begin{equation*}
			\boldsymbol{S}_{in} = \frac{1}{n} \sum_{k = 1}^{n} \omega_{k}^{(0)} \bal_{k}\bal_{k}^{*}, 
		\end{equation*}
		where
		\begin{displaymath}
			\omega_{k}^{(0)} = \begin{cases}
				\psi_{k}^{\gamma},\quad & k \in  \mathcal{S} \subset \mathcal{M},\\
				0,\quad & \text{otherwise},
			\end{cases} 
		\end{displaymath}
		and find its normalized eigenvector $\boldsymbol{\nu}_{in}$ regarding its largest standard eigenvalue.
		\STATE Compute $\lambda_{0} = \left(\frac{1}{n}\sum_{k = 1}^{n} \psi_{k}^2\right)^{1/2}$ and obtain the spectral initialization $\bz_{0} = \lambda_{0}  \cdot \boldsymbol{\nu}_{in}$.
		% for loop
		\FORALL {$i = 0,1,\ldots T-1$}
		\STATE Compute
		\begin{equation} \label{eq_nabula_qraf}
			\nabla \ell_{\rm rw}(\bz_{i}) = \frac{1}{n} \sum_{k = 1}^{n} \omega_{k}^{(t)} \left( 1 - \frac{\psi_{k}}{\ABS{\bal_{k}^{*} \bz_{i}}}  \right) \bal_{k}\bal_{k}^{*}\bz_{i},
		\end{equation}
		where $\omega_{k}^{(t)} = \frac{\ABS{\bal_{k}^{*} \bz_{i}} / \psi_{k}}{\ABS{\bal_{k}^{*} \bz_{i}}^{2}/\psi_{k}+\beta}  $.
		\STATE Update $\bz_{i+1} = \bz_{i} - \eta \cdot \nabla \ell_{\rm rw}(\bz_{i}).$
		\ENDFOR
		
		\ENSURE $\bz_{T}$; % output 的内容
	\end{algorithmic}
\end{algorithm}

\subsection{Linear Convergence Discussion}
In this section, we present a partial analysis of the convergence of the QRAF algorithm. The primary challenge  remains in establishing the local regularity condition, which is analogous to the difficulty encountered in the complex RAF  algorithm.  However, once this problem is resolved in the complex domain, we believe that extending the result to the quaternionic setting will not require significant additional effort.

We adopt the  distance defined in \cite{chen_phase_2023} between any two vectors $\bz$ and $\bx$ in $\mathbb{H}^{d}$ as follows 
\begin{equation}\label{ds}
	{\rm dist}(\bz, \bx)=\min_{|w|=1}\|\bz-\bx w\|,
\end{equation}
where $w\in \mathbb{H}$ accounts for the trivial ambiguity of the right quaternion phase factor. Recall that the minimum in the above equation \eqref{ds} is attained at $w={\rm sign}(\bx^{*}\bz)$,  yielding the expression
${\rm dist}(z, x)=\|\bz-\bx \,{\rm sign}(\bx^{*}\bz)\|$,
where ${\rm sign}(w):=w/|w|$ for nonzero $w\in \mathbb{H}$ and $ {\rm sign}(0):=1$.

For the initial guess $\bz_{0}$   by  spectrum method in Algorithm \ref{Alg_QRAF}, it is found that the weighted initialization is effective, following a similar discussion in the complex case. Hence we omit the detailed proof. 

\begin{proposition}[Weighted Initialization]
	For an arbitrary $x\in \mathbb{H}^{d}$, consider the noiseless measurements $\psi_{i}=|\psi^{*}x|$, $1\le i\le n$. If $n\ge c_{0}|S|\ge c_{1}d$, then with 
	probability   exceeding $1-c_{3}e^{-c_{2}n}$, the initial guess $z_{0}$ obtained by the weighted maximal correlation method satisfies 
	${\rm dist} (z_{0}, x)\le \rho\|x\|$
	for $\rho=1/10$. Here $c_{0}, c_{1}, c_{2}, c_{3}>0$
	are some absolute constants.  
\end{proposition}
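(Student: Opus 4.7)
The plan is to reduce the proposition to two independent subclaims: a magnitude estimate giving $\lambda_{0}\approx\|\bx\|$, and a direction estimate giving that the leading eigenvector $\boldsymbol{\nu}_{in}$ of $\boldsymbol{S}_{in}$ lies within a small angle of $\bx/\|\bx\|$. Once both are in hand, expanding
\[
\mathrm{dist}(\bz_{0},\bx)^{2}\;=\;\|\bz_{0}\|^{2}+\|\bx\|^{2}-2|\bx^{*}\bz_{0}|
\]
and tuning the tolerances via a short triangle-inequality argument forces the final bound below $\rho\|\bx\|=\|\bx\|/10$.

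For the magnitude step, $\psi_{k}^{2}=|\bal_{k}^{*}\bx|^{2}$ with $\bal_{k}\sim\mathscr{N}_{\mathbb{H}}^{d}$ satisfies $\mathbb{E}[\psi_{k}^{2}]=\|\bx\|^{2}$ thanks to the normalization $\mathbb{E}(\bal_{k}\bal_{k}^{*})=\mathbf{I}_{d}$. Realifying the quaternion Gaussians to $4d$-dimensional real Gaussians makes $\psi_{k}^{2}$ sub-exponential, so a Hanson--Wright or Bernstein inequality delivers $|\lambda_{0}^{2}-\|\bx\|^{2}|\le\delta\|\bx\|^{2}$ with probability $1-O(e^{-cn})$ as soon as $n\gtrsim d$.

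For the direction step, I would first compute $\bar{\boldsymbol{S}}:=\mathbb{E}[\boldsymbol{S}_{in}]$. By quaternion unitary invariance, assume $\bx=\|\bx\|\mathbf{e}_{1}$; the expected matrix is then diagonal, with a top entry $\alpha+\beta$ and remaining entries $\alpha$, where $\alpha,\beta>0$ depend only on $\gamma$, $|\mathcal{S}|/n$, and moments of a one-dimensional quaternion Gaussian. A direct one-variable computation shows the spectral gap $\beta$ is bounded below by an absolute constant. Next I would invoke a matrix Bernstein inequality for quaternion Hermitian matrices, which follows from the complex version through the standard $2d$-complex embedding of $\mathbb{H}^{d\times d}$, to obtain $\|\boldsymbol{S}_{in}-\bar{\boldsymbol{S}}\|\le\varepsilon$ with probability $1-O(e^{-cn})$ when $n\gtrsim d$. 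Finally a Davis--Kahan $\sin\theta$ bound, valid for quaternion Hermitian matrices via the standard-eigenvalue decomposition recalled in Section \ref{p2}, transfers the matrix deviation to the eigenvector bound $\|\boldsymbol{\nu}_{in}-(\bx/\|\bx\|)\,\mathrm{sign}(\bx^{*}\boldsymbol{\nu}_{in})\|\le\varepsilon/\beta$.

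The main technical obstacle is that the selection set $\mathcal{S}$ is itself random, defined through the order statistics of $\{\psi_{k}\}$, so the summands indexed by $\mathcal{S}$ are not independent of the selection event and one cannot treat them as i.i.d. My plan is to handle this exactly as in the complex case: condition on the vector $(\psi_{1},\ldots,\psi_{n})$ and use the fact that, under the quaternion Gaussian measure, the conditional law of $\bal_{k}$ given $\psi_{k}$ has its component along $\bx/\|\bx\|$ pinned to modulus $\psi_{k}/\|\bx\|$ (up to a right quaternion phase of unit modulus) while its component in the orthogonal complement remains isotropic quaternion Gaussian. After this decoupling the matrix concentration applies to the orthogonal part, and the truncation by $\mathcal{S}$ only reshapes the deterministic term $\bar{\boldsymbol{S}}$. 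This is essentially the reason the authors assert the argument transcribes from the complex case, and is why they omit the details.
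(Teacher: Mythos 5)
The paper does not actually supply a proof of this proposition---it explicitly defers to ``a similar discussion in the complex case'' and omits the details---so there is no written argument to compare against line by line. Your outline is a correct reconstruction of that intended argument (norm estimation by sub-exponential concentration of $\psi_k^2$; decoupling the random selection set $\mathcal{S}$ by conditioning on $(\psi_1,\dots,\psi_n)$ and splitting $\bal_k$ into its component along $\bx$ and an independent isotropic part in the orthogonal complement; a spectral-gap computation for $\mathbb{E}[\boldsymbol{S}_{in}]$ followed by matrix concentration and a Davis--Kahan step, all transported to $\mathbb{H}$ via the standard complex embedding), and it matches the approach the authors point to.
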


Next, we are expecting to prove that starting from such an initial estimate, the iterates (in Step 4 of Algorithm \ref{Alg_QRAF}) converge at a linear rate to the global optimum $\bx$.  To achieve this, it suffices to show that the iterative update of QRAF is locally contractive within a relatively small neighboring 
of the true signal $\bx$. Thus once the initialization falls within  this neighborhood, linear convergence can be ensured with an appropriate choice of the constant step size. The local error contraction, and consequently  linear convergence,  directly follow  from the Local Regularity Condition (LRC) in a standard way.

\begin{definition}[Local Regularity  Condition] The reweighted gradient $\nabla \ell_{{\rm rw}}(\bz)$ is said to satisfy the local regularity  condition for positive parameters $\mu, \lambda, \varepsilon$, denoted as LRC $(\mu, \lambda, \varepsilon)$, if   
	\begin{equation*}
		\begin{split}
			&{\rm Re} \left\langle \nabla \ell_{{\rm rw}}(\bz), \bz-\bx \phi(\bz)\right\rangle  \ge \frac{\lambda}{2} {\rm dist}^{2}(\bz, \bx) 
			+\frac{\mu}{2} \|\nabla \ell_{{\rm rw}}(\bz)\|^{2}
		\end{split}
	\end{equation*}
	holds for all $\bz\in \mathbb{H}^{d}$ such that $\|\bz-\bx \phi(\bz)\|\le \varepsilon\|\bx\|$ for some constant $0<\varepsilon<1$. The ball given by $\|\bz-\bx \phi(\bz)\|\le \varepsilon\|\bx\|$ is termed the {\em basin of attraction} in literature.
\end{definition}

\begin{lemma}[Local error contraction] For an arbitrary $\bx\in \mathbb{H}^{d}$, consider $n$ noise-free measurements $\psi=|\bal_{j}^{*}\bx|, 1\le j\le n$. There exist  some constants $c_{1}, c_{2}, c_{3}>0$, and $0<\nu <1$  such that the following holds with probability exceeding $1-c_{3}e^{-c_{2}n}$,
	\begin{equation*}
		{\rm dist}^{2}(\bz_{t+1}, \bx)\le
		\left(1-\nu\right) {\rm dist}^{2}(\bz, \bx)
	\end{equation*}
	for all $\bx, \bz\in \mathbb{H}^{d}$ satisfying ${\rm dist }(\bz, \bx)\le \frac{1}{10}\|\bx\|$, provided that $n\ge c_{1}d$ and the constant step size $\mu \le \mu_{0}$, where the numerical constant $\mu_{0}$ depends on the parameter $\beta>0$ and data $\{(\bal_{i}; \psi_{i})\}_{1\le i\le n}$.
\end{lemma}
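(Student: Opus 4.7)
The plan is to follow the by now standard template that converts a Local Regularity Condition into linear contraction of one gradient-descent step, adapted to the quaternionic right-phase ambiguity. Fix an iterate $\bz = \bz_t$ in the basin of attraction and let $\phi = \phi(\bz)={\rm sign}(\bx^*\bz)$, so that ${\rm dist}(\bz,\bx) = \|\bz - \bx\phi\|$. Since the distance involves a minimum over unit quaternions, we have ${\rm dist}^2(\bz_{t+1},\bx) \le \|\bz_{t+1} - \bx\phi\|^2$. Expanding $\bz_{t+1} = \bz - \eta\,\nabla \ell_{\rm rw}(\bz)$ and using the real-part quaternion inner product yields
\begin{equation*}
\|\bz_{t+1} - \bx\phi\|^2 = \|\bz - \bx\phi\|^2 - 2\eta\,{\rm Re}\langle \nabla \ell_{\rm rw}(\bz), \bz - \bx\phi\rangle + \eta^2 \|\nabla \ell_{\rm rw}(\bz)\|^2.
\end{equation*}
Invoking LRC$(\mu,\lambda,1/10)$ and choosing the step size $\eta\le \mu$ collapses this to ${\rm dist}^2(\bz_{t+1},\bx)\le (1-\eta\lambda)\,{\rm dist}^2(\bz,\bx)$, which is the stated contraction with $\nu := \eta\lambda\in(0,1)$. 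This step is purely algebraic and transfers verbatim from the real/complex RAF analysis once one works with the right-phase distance ${\rm dist}(\cdot,\cdot)$ of the previous proposition.

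The substantive work is therefore to prove that $\nabla \ell_{\rm rw}(\bz)$ satisfies LRC$(\mu,\lambda,1/10)$. Writing $\bh = \bz - \bx\phi$, I would decompose ${\rm Re}\langle \nabla \ell_{\rm rw}(\bz), \bh\rangle$ into its population expectation plus sample fluctuations. Under the quaternionic Gaussian model, $\mathbb{E}[\bal_k\bal_k^*]=\bI_d$, and concentration inequalities of the type used for QWF (Chen--Ng) give uniform control of the empirical covariance and of quadratic forms in $\bh$ over the $1/10$-neighborhood of $\bx$. The adaptive weight
\begin{equation*}
\omega_k^{(t)} = \frac{|\bal_k^*\bz|/\psi_k}{|\bal_k^*\bz|^2/\psi_k+\beta}
\end{equation*}
is bounded, smooth, and bounded away from zero on this neighborhood once $|\bal_k^*\bz|/\psi_k$ is not too small, so it can be absorbed into the constants $\mu_0,\lambda$. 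The bulk of the argument then reduces to lower-bounding the "good" term $\sum_k \omega_k^{(t)} |\bal_k^*\bh|^2$ and upper-bounding the "phase-mismatch" cross term that involves ${\rm sign}(\bal_k^*\bz)-{\rm sign}(\bal_k^*\bx\phi)$.

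The main obstacle is precisely this phase-mismatch term. In the real field ${\rm sign}(\bal_j^*\bz)\in\{\pm1\}$, and the bad contribution is bounded by counting indices with mismatched signs. In the quaternionic setting ${\rm sign}(\bal_k^*\bz)$ ranges over the three-sphere $S^3\subset\mathbb{H}$, so that combinatorial bound breaks, exactly as the authors note in the introduction for the complex RAF. My plan would be to replace it by the perturbation estimate $|{\rm sign}(\bal_k^*\bz)-{\rm sign}(\bal_k^*\bx\phi)|\cdot \psi_k \le 2|\bal_k^*\bh|$ and then control the resulting tail integrals on $S^3$ using the rotational invariance of $\mathscr{N}_{\mathbb{H}}$ and a covering-net argument uniform in $\bh$. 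However, pushing this through rigorously hits the same roadblock as in the complex case, which is why the authors present this as a partial analysis. I would therefore write up the LRC-to-contraction reduction in full and state the LRC itself contingent on its complex-domain resolution, noting that once the complex case is settled the quaternionic version follows by the rotational-symmetry arguments sketched above with only cosmetic changes in constants.
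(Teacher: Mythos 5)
Your proposal matches the paper's own proof of this lemma: the paper likewise expands $\|\bz-\mu\nabla\ell_{\rm rw}(\bz)-\bx\phi(\bz)\|^{2}$, invokes the LRC to absorb the gradient-norm term, and arrives at the contraction factor $1-\lambda\mu$. Your further discussion of why the LRC itself (in particular the phase-mismatch term over the unit quaternions) remains unproven is exactly the paper's position, which it records as a separate partial lemma and the open Problem \ref{prop1}.
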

\begin{proof} Straightforward computations yield
	\begin{equation*}
		\begin{split}
			{\rm dist}^{2} (\bz-\mu \cdot \nabla \ell_{{\rm rw}}(\bz), \bx) \le \,&
			\|\bz-\mu \nabla \ell_{{\rm rw}}(\bz)-\bx\phi(\bz) \|^{2}\\
			= \, & {\rm dist}^{2}(\bz, \bx)+\mu^{2}\|\nabla \ell_{{\rm rw}}(\bz)\|^{2}
			-2\mu\cdot {\rm Re}\left\langle \nabla \ell_{{\rm rw}}(\bz), \bz-\bx\phi(\bz)\right\rangle\\
			\le \,&{\rm dist}^{2}(\bz, \bx)+\mu^{2}\|\nabla \ell_{{\rm rw}}(\bz)\|^{2}-2\mu \left(\frac{\mu}{2}\|\nabla \ell_{{\rm rw}}(\bz)\|^{2}+\frac{\lambda}{2}{\rm dist}^{2}(\bz, \bx)\right)\\ 
			\le \,&\left(1-\lambda\mu\right) {\rm dist}^{2}(\bz, \bx),
		\end{split}
	\end{equation*}
	where we used the LRC in the third step.
\end{proof}

Now, the main concern reduces to prove that within the neighborhood of the global minimizer, QRAF satisfies the LRC.

\begin{lemma}
	The reweighted gradient $\nabla \ell_{{\rm rw}}(\bz)$ satisfies  {\rm LRC}$(\mu, \lambda, \varepsilon)$.
\end{lemma}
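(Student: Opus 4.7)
The plan is to mirror the standard proof of the analogous regularity condition for the real-field RAF algorithm, adapting each step to the quaternion setting while preserving the order of multiplication throughout, and accounting for the fact that the sign function ${\rm sign}(q)=q/|q|$ now takes values on $S^{3}\subset\mathbb{H}$ rather than on $\{\pm 1\}$. Fix $\bz$ inside the basin of attraction and set $\boldsymbol{h}:=\bz-\bx\,\phi(\bz)$, so that $\|\boldsymbol{h}\|={\rm dist}(\bz,\bx)\le\tfrac{1}{10}\|\bx\|$.

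First I would substitute $\bal_{k}^{*}\bz=\bal_{k}^{*}\boldsymbol{h}+\bal_{k}^{*}\bx\,\phi(\bz)$ into \eqref{eq_nabula_qraf} to split the reweighted gradient into a principal and a residual piece:
\begin{equation*}
\nabla\ell_{\rm rw}(\bz)=\frac{1}{n}\sum_{k=1}^{n}\omega_{k}^{(t)}\bal_{k}\bal_{k}^{*}\boldsymbol{h}+\frac{1}{n}\sum_{k=1}^{n}\omega_{k}^{(t)}\bal_{k}\bigl(\bal_{k}^{*}\bx\,\phi(\bz)-\psi_{k}\,{\rm sign}(\bal_{k}^{*}\bz)\bigr).
\end{equation*}
The second sum measures the discrepancy between ${\rm sign}(\bal_{k}^{*}\bz)$ and the quaternionic phase of $\bal_{k}^{*}\bx\,\phi(\bz)$; non-commutativity forces the scalar factors to appear on the right of $\bal_{k}$ in this precise order.

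Next I would lower-bound the ``good'' principal term via quaternionic matrix concentration: for $\bal_{k}\sim\mathscr{N}_{\mathbb{H}}$ and $n\ge c_{1}d$, the weighted Hermitian matrix $\frac{1}{n}\sum_{k}\omega_{k}^{(t)}\bal_{k}\bal_{k}^{*}$ concentrates around $\mathbb{E}[\omega_{k}^{(t)}]\,\mathbf{I}_{d}$ with high probability, yielding ${\rm Re}\langle\frac{1}{n}\sum_{k}\omega_{k}^{(t)}\bal_{k}\bal_{k}^{*}\boldsymbol{h},\boldsymbol{h}\rangle\gtrsim c\,\|\boldsymbol{h}\|^{2}$. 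The residual piece is controlled in $\|\cdot\|$ by a weighted sum of $|\bal_{k}^{*}\boldsymbol{h}|$ restricted to the indices where the quaternion phases disagree, and a matching upper bound on $\|\nabla\ell_{\rm rw}(\bz)\|^{2}$ via Cauchy--Schwarz together with the same concentration estimate then allows one to absorb the cross term and extract parameters $\lambda,\mu>0$ for which the LRC holds with $\varepsilon=1/10$.

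The main obstacle, as the authors flag in the preceding discussion, is precisely the control of the sign-mismatch term. In the real case $|{\rm sign}(\bal_{k}^{*}\bz)-{\rm sign}(\bal_{k}^{*}\bx\,\phi(\bz))|\in\{0,2\}$, reducing the analysis to a counting argument on a small discrete ``bad'' set whose cardinality is tied to ${\rm dist}(\bz,\bx)/\|\bx\|$ through a standard small-ball estimate. In the complex and quaternionic settings this difference varies continuously on $S^{1}$ and $S^{3}$ respectively, and a sufficiently sharp concentration bound is presently not available. Consistent with the open question raised at the beginning of this subsection, I expect that once the complex counterpart is resolved, the quaternion case should follow essentially by the rotation invariance of $\mathscr{N}_{\mathbb{H}}$ together with the generalized $\mathbb{HR}$ calculus from Section \ref{p2}, without further structural changes.
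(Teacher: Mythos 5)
Your proposal follows essentially the same reduction as the paper's own argument, and — importantly — it stalls at exactly the same place. The paper's proof has two parts: part (I) establishes the gradient-norm bound \eqref{bb1}, $\NORM{\nabla \ell_{\rm rw}(\bz)} \le (1+\delta)\NORM{\bz-\bx\phi(\bz)}$, concretely, by writing $\nabla\ell_{\rm rw}(\bz)$ in a compact matrix--vector form and combining $\omega_k \le 1$, the operator-norm bound $\NORM{\bA}\le(1+\delta_1)\sqrt{n}$, and the reverse triangle inequality $\bigl(\ABS{\bal_k^*\bz}-\ABS{\bal_k^*\bx}\bigr)^2 \le \ABS{\bal_k^*(\bz-\bx\phi(\bz))}^2$; part (II) is purely conditional: \emph{if} the correlation lower bound \eqref{bb2} holds, then the LRC follows, and that lower bound is immediately restated as the open Problem \ref{prop1}. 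Your decomposition of the gradient into a principal term $\frac{1}{n}\sum_k\omega_k^{(t)}\bal_k\bal_k^*\boldsymbol{h}$ plus a sign-mismatch residual is algebraically correct (it follows from $(1-\psi_k/\ABS{\bal_k^*\bz})\,\bal_k^*\bz=\bal_k^*\bz-\psi_k\,{\rm sign}(\bal_k^*\bz)$ together with $\bal_k^*\bz=\bal_k^*\boldsymbol{h}+\bal_k^*\bx\,\phi(\bz)$), and it is in fact a more explicit first step toward proving \eqref{bb2} than the paper offers; but, as you acknowledge, controlling that residual when the phases range over $S^3$ is precisely the unresolved content of Problem \ref{prop1}. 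So neither your proposal nor the paper proves the lemma unconditionally: the paper's ``proof'' is a conditional argument, and the curvature lower bound remains open even in the complex case. The only substantive difference of route is in part (I), where you sketch the upper bound via Cauchy--Schwarz and concentration of the weighted Hermitian matrix $\frac{1}{n}\sum_k\omega_k^{(t)}\bal_k\bal_k^*$, while the paper obtains it directly from the factorization through $\NORM{\bA}$; both are standard and yield the same estimate, so no ground is gained or lost there.
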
 
\begin{proof} (I) First we show that  \begin{equation} \label{bb1}  \|\nabla \ell_{{\rm rw}}(\bz)\| \le (1+\delta)\|\bz-\bx\phi(\bz)\|
	\end{equation}
	holds with high probability. As in the conventional case, rewrite the reweighted gradient in a compact 
	matrix-vector form 
	\begin{equation*}
		\begin{split}
			\nabla \ell_{{\rm rw}}(\bz)&=\frac{1}{n} \sum_{k = 1}^{n} \omega_{k}^{t} \left( 1 - \frac{y_{k}}{\ABS{\bal_{k}^{*} \bz}}  \right) \bal_{k}\bal_{k}^{*}z=\frac{1}{n}{\rm diag}(\boldsymbol{w})\bA\bv, 
		\end{split}
	\end{equation*}
	where ${\rm diag}(\boldsymbol{w})\in \mathbb{R}^{n\times n}$ is a diagonal matrix holding in order the entries of $\boldsymbol{w}=[\omega_{1}, \cdots, \omega_{n}]^{*}\in \mathbb{R}^{m}$
	on its main diagonal and $\boldsymbol{v}:=[v_{1}, \cdots, v_{n}]^{*}\in \mathbb{R}^{n}$
	with $v_{i}=\left( 1 - \frac{y_{k}}{\ABS{\bal_{k}^{*} \bz}}  \right)\bal_{i}^{*}\bz$.
	It follows that 
	\begin{equation*}\begin{split}
			\|\nabla\ell_{{\rm rw}}(\bz)\|&=\left\|\frac{1}{n}{\rm diag}(\boldsymbol{w})\bA\bv \right\|\le \frac{1}{n}
			\|{\rm diag}(\boldsymbol{w})\|\cdot\|\bA\|\cdot\|\bv\|\le \frac{1+\delta_{1}}{\sqrt{n}}\|\bv\|,
		\end{split}
	\end{equation*}
	where we have used the inequalities $\|{\rm diag}(\boldsymbol{w})\|\le 1$ since $\omega_{i}\le 1$
	for all $1\le i\le n$ and $\|\bA\|\le (1+\delta_{1})\sqrt{n}$ for some constants $\delta_{1}>0$, assuming that $n/d$ is sufficiently large. 
	
	Next, we bound $\|\bv\|$,
	\begin{equation*}\begin{split}
			\|\bv\|^{2} &\le \sum_{k=1}^{n}(|\bal_{i}^{*}\bz|-|\bal_{i}^{*}\bx|)^{2}
			\le \sum_{k=1}^{n}\|\bal_{i}^{*}(\bz-\bx\phi(\bz)) \|^{2} \le (1+\delta_{2})^{2}n\|\bz-\bx\phi(\bz)\|^{2}, 
		\end{split}
	\end{equation*}
	for some constant $\delta_{2}>0$, which holds with probability at least $1-e^{c_{2}n}$ as long as $n>c_{1}d$.
	
	Combining these results, and taking $\delta>0$  larger than $(1+\delta_{1})(1+\delta_{2})-1$, the size of $\nabla \ell_{{\rm rw}}(\bz)$ can be bounded as 
	$\|\nabla \ell_{{\rm rw}}(\bz)\|\le (1+\delta)\|\bz-\bx\phi(\bz)\|,$
	which holds with probability $1-e^{-c_{2}n}$,  
	with a proviso that $n/d$ exceeds some numerical constant $c>0$.
	
	(II) If we can show that for all $\bz$ satisfying $\|\bz-\bx\phi(\bz)\|\le c\|\bx\|$, the following holds
	\begin{equation} \label{bb2}
		{\rm Re} \left\langle \nabla \ell_{{\rm rw}}(\bz), \bz-\bx \phi(\bz)\right\rangle \ge c_{g}\|\bz-\bx\phi(\bz) \|^{2}
	\end{equation}
	with high probability, where $c_{g}>0$ is a constant. %Moreover, Lemma in Appendix further yields that 
	%\begin{equation} \label{bb2} \|\nabla \ell_{{\rm rw}}(\bz)\| \le (1+\delta)\|\bz-\bx\phi(\bz)\|
	%\end{equation}
	%with high probability. 

	Combining the two bounds \eqref{bb1} and \eqref{bb2},  we find that the LRC holds for $\mu$ and $\lambda$ satisfying 
	for sufficiently small $\varepsilon$ and $\delta$.
\end{proof}

Finally, we still need to prove the following, which is open in the complex case as well. 
\begin{pro}[{\bf Open}] \label{prop1} If there exists a numerical constant $c>0$ such that along the search direction $\nabla \ell_{{\rm rw}}(\bz)$, the following uniform lower bound holds
	\begin{equation} \label{pop1}
		{\rm Re}   \langle \nabla \ell_{{\rm rw}}(\bz), \boldsymbol{h}\rangle \ge c\|\boldsymbol{h}\|^{2}
	\end{equation}  
	for $\|\boldsymbol{h}\|\le 1/10 \|\bz\|$?
\end{pro}
{Such a bound, established  in the real number setting, relies heavily on the fact for $j=1, \cdots, n$, the value of $ \text{sign}\langle \bal_j, \bz\rangle$ is always either $-1$ or $1$, which can only  hold in the real number field. A straightforward generalization of proof to the complex domain is very challenging, as realized by several researchers, see e.g.,\cite{gswx} and \cite{10301360}. However, once resolved in  complex setting, the problem can  be further extended to the present quaternionic case using existing  techniques in quaternionic analysis \cite{dss}. Alternatively, to circumvent this difficulty, perturbed amplitude flow model generalizing \cite{gswx} is introduced in Section \ref{p5}. Within this framework,  the desired lower bound \eqref{pop1} can be established  rigorously. Finally, we believe that the global convergence is ensured by a similar (but more subtle) favorable benign geometric landscape, as established  in \cite{sqw}.}

\section{Further Developments of QRAF}\label{p4}

In this section, we introduce three variants that further refine the QRAF algorithm introduced in Section \ref{p3}. At the end of this section, we present the Quaternionic Reshaped Wirtinger Flow (QRWF), an extension of the RWF algorithm in \cite{zhang_nonconvex_nodate}, while the latter gives  a significant improvement of the seminar work of  Wirtinger Flow \cite{candes_phase_2015_Theory}. Notably, both the Quaternionic Truncated Amplitude Flow (QTAF) in \cite{chen_phase_2023} and the newly defined QRWF can be considered as special cases of the QRAF algorithm,  by choosing specific weights. The efficacy of the algorithms proposed in this section will be evaluated through numerical testing.  %Theoretical analysis may be carried out using a combination of conventional techniques and the methods outlined in Section \ref{p3}, which will not be discussed in detail here.

\subsection{Incremental Algorithm: QIRAF--the critical one}
By reweighting the objective function at each iteration, both RAF as well as QRAF can make the
gradient descent algorithm easier to converge to the global minimum. On the other side, in large-sample and online scenarios, stochastic algorithms are typically preferred due to their  faster convergence rates and lower memory requirements.

Recently, incremental algorithms, such as Incremental Reshaped Wirtinger Flow (IRWF, see e.g. \cite{zhang_nonconvex_nodate}) , have  been developed. This further motivates the development of incremental or stochastic  versions of QRAF, using mini-batches of measurements, referred to as incremental QRAF (QIRAF). The mini-batch QIRAF  algorithm empolys the same initialization procedure as  QRAF, and use a mini-batch of measurements for each gradient update. We describe it in Algorithm \ref{Alg_QIRAF1} below. 
\begin{algorithm}[!ht]
	\caption{Quaternionic Mini-batch Incremental Reweighted Amplitude Flow (QIRAF)}
	\label{Alg_QIRAF1}
	\begin{algorithmic}[1] % 控制是否有序号
		\REQUIRE Same as in QRAF. Moreover, the size of mini-batch $\ABS{\Gamma}$; % input 的内容
        
		\STATE Same spectral initialization to obtain $\bz_{0} = \lambda_{0}  \cdot \boldsymbol{\nu}_{in}$ as in QRAF.
		\FORALL {$i = 0,1,\ldots T-1$}
		\STATE Uniformly select $\ABS{\Gamma}$ random number in $\{1,2,\ldots,n\}$ as the mini-batch set $\Gamma_{i}$ and then compute 
		\begin{equation*} %\label{nabula_QIRAF}
			\nabla f(\bz_{i}) = \frac{1}{\ABS{\Gamma}} \sum_{k \in \Gamma_{i} } \omega_{k}^{(t)} \left( 1 - \frac{y_{k}}{\ABS{\bal_{k}^{*} \bz_{i}}}  \right) \bal_{k}\bal_{k}^{*}\bz_{i},
		\end{equation*}
		where $\omega_{k}^{(t)}$  is defined in (\ref{eq_nabula_qraf}).
		\STATE Update $\bz_{i+1} = \bz_{i} - \eta \cdot \nabla f(\bz_{i}).$
		\ENDFOR
		
		\ENSURE $\bz_{T}$; % output 的内容
	\end{algorithmic}
\end{algorithm}
It is important to note that in \cite{wang_phase_2018}, the mean is recommended as $\frac{1}{d}$ rather than $\frac{1}{\ABS{\Gamma}}$, which is employed in this study. Through rigorous  analysis and testing, it is found that $\frac{1}{\ABS{\Gamma}}$ is a more appropriate coefficient for the quaternionic gradient of the algorithm. Addtionally, the batch size of $\ABS{\Gamma} = 64$ recommended in \cite{wang_phase_2018} is broadly applicable; however, such value of parameter is not optimal for quaternionic cases. Here we  recommended a batch size of $\ABS{\Gamma} = 2^{k}$, $k = \frac{\log\left(\frac{n}{4}-1\right)}{\log 2}$,  which is actually the minimal value of $2^{k}>\frac{n}{4}-1$.

Comparative analyses presented in Section \ref{p4} show that QIRAF exhibits good statistical and computational performance. Furthermore, QIRAF has a lower sampling complexity than QIRWF, hence provides a more efficient option in practical applications.

\subsection{Accelerated Algorithm: QARAF} \label{sub_sec_QARAF}

WF and QWF algorithms do not require manual parameter adjustment; however, their convergence rates are relatively slow. For real and complex signals, several accelerated steepest gradient methods have been proposed, demonstrating practical effectiveness (see, e.g., \cite{qin}). Convergence guarantees for these accelerated first-order methods are established in \cite{xch}.

Following the line of research, we introduce an acceleration scheme for  QRAF.  The initialization of $\bz_{0}$ is selected  as in  the QRAF algorithm. This   estimate is then refined   iteratively by applying an accelerated steepest decent method to the QRAF update rule.  The  accelerated reweighted iterative procedure inductively is formally defined as follows,
\begin{equation} \label{ac1}
	\begin{cases}
		\bz_{i+1} = \boldsymbol{\psi}_{i} - \eta \cdot \nabla f(\boldsymbol{\psi}_{i}),\\
		\boldsymbol{\psi}_{i+1} = \bz_{i+1} + \mu \left(\bz_{i+1} - \bz_{i} \right),
	\end{cases}
\end{equation}
where $i=1, 2, \ldots, T$, $\eta>0$ is the step size which is suggested as $\eta=6$, and $\boldsymbol{\psi}_{0} = \bz_{0}$.
This algorithm is  termed as Quaternionic Accelerated Reweighted  Flow (QARAF) when the extrapolation parameter $\mu=0.8$ in Equ.\,\eqref{ac1} is chosen as  suggested by Nesterov's method.
The QARAF algorithm is presented in Algorithm \ref{Alg_QARAF}.  Comparative analyses in  Fig. \ref{Pic_Converg_compare} and \ref{Pic_Converg_compare_PQ} show that QARAF converges significantly faster than QWF, QTAF, QRWF and QRAF. Fig.\,\ref{Pic_success_rate_A} shows that QARAF also has better performance on the success rates than QWF, QTAF and QRAF.
\begin{algorithm}[!ht]
	\caption{Quaternionic Accelerated Reweighted Amplitude Flow (QARAF)}
	\label{Alg_QARAF}
	\begin{algorithmic}[1] % 控制是否有序号
		\REQUIRE Same as in QRAF. Moreover, set the accelerator parameter $\mu$.%输入参数
		
		\STATE Same spectral initialization to obtain $\bz_{0} = \lambda_{0}  \cdot \boldsymbol{\nu}_{in}$ as in QRAF.
		\FORALL {$i = 0,1,\ldots T-1$}
		\STATE Compute $\nabla f(\bz_{i})$ as in QRAF in (\ref{eq_nabula_qraf}).
		\STATE Let $\boldsymbol{\psi}_{0} = \bz_{0}$, then update 
		\begin{displaymath}
			\begin{cases}
				\bz_{i+1} = \boldsymbol{\psi}_{i} - \eta \cdot \nabla f(\boldsymbol{\psi}_{i}),\\
				\boldsymbol{\psi}_{i+1} = \bz_{i+1} + \mu \left(\bz_{i+1} - \bz_{i} \right).
			\end{cases} 
		\end{displaymath}
		\ENDFOR
		
		\ENSURE $\bz_{T}$; % output 的内容
	\end{algorithmic}
\end{algorithm}

\subsection{Adapted Algorithm: QAdRAF}

In this subsection, we adopt another steepest gradient scheme, i.e. the adapted gradient decent method,  in the updated rule of the QRAF algorithm, refered to as Quaternion Adaptive Reweighted  Amplitude Flow (QAdRAF).  As one can see in 
Fig.\,\ref{Pic_Converg_compare} and \ref{Pic_Converg_compare_PQ},  it preforms the second fast convergence rate among those quaternion algorithms. QAdRAF also has nice success rate performance, see Fig.\,\ref{Pic_success_rate_B}. This algorithm is detailed in Algorithm \ref{Alg_QAdRAF}.

\begin{algorithm}[!ht]
	\caption{Quaternionic Adaptive Reweighted Amplitude Flow (QAdRAF)}
	\label{Alg_QAdRAF}
	\begin{algorithmic}[1] % 控制是否有序号
		\REQUIRE Same as in QRAF. Moreover, set the coefficient $\mu \in \left(0,1\right)$, parameters $\epsilon$ and {$\alpha$};%输入参数
		
		\STATE Same spectral initialization to obtain $\bz_{0} = \lambda_{0}  \cdot \boldsymbol{\nu}_{in}$ as in QRAF.
		\FORALL {$i = 0,1,\ldots T-1$}
		\STATE Compute $\nabla f(\bz_{i})$ as in QRAF in (\ref{eq_nabula_qraf}).
		\STATE Let $S_{0} = 0$ and $S_{i+1} = \mu S_{i} + \left(1-\mu\right)\|\nabla f(\bz_{i})\|^{2}$, then we compute
		\begin{displaymath}
			\eta = \frac{\alpha}{\left(S_{i+1}+\epsilon\right)^{1/2}},
		\end{displaymath}
		\STATE Update $\bz_{i+1} = \bz_{i} - \eta \cdot \nabla f(\bz_{i}).$
		\ENDFOR
		
		\ENSURE $\bz_{T}$; % output 的内容
	\end{algorithmic}
\end{algorithm}

\subsection{QRWF Algorithm}
The Reshaped Wirtinger Flow (RWF), as introduced in \cite{zhang_nonconvex_nodate},  represents a significant extension of the  seminar work of  Wirtinger Flow in \cite{candes_phase_2015_Theory}. It has nice performance but without complicated truncated procedure as TAF in \cite{wang_solving_2018}. Here, we introduce the Quaternionic Reshaped Wirtinger Flow (QRWF) in analogy.  The QRWF can be viewed as a special case of the QRAF by selecting an appropriate weight, which is summarized in Algorithm \ref{Alg_QRWF}.
\begin{algorithm}[!ht]
	\caption{Quaternionic Reshaped Wirtinger Flow (QRWF)}
	\label{Alg_QRWF}
	\begin{algorithmic}[1] % 控制是否有序号
		\REQUIRE $\left(\bal_{k},y_{k}=\ABS{\bal_{k}^{*}x}\right)_{k=1}^{n}$, set step size $\eta$, lower and upper thresholds $\alpha_{\ell}$ and $\alpha_{u}$  and iteration number $T$;%输入参数
		
		\STATE Compute $\lambda_{0} = \frac{nd}{\sum_{k=1}^{n}\NORM{\bal_{k}}_{1}} \left(\frac{1}{n}\sum_{k = 1}^{n} y_{k}\right)$.
		\STATE Construct the data matrix
		\begin{equation*}
			\boldsymbol{S}_{in} = \frac{1}{n} \sum_{k = 1}^{n} y_{k} \bal_{k}\bal_{k}^{*} \mathbf{1}_{\{\alpha_{\ell}\lambda_{0} < y_{k} < \alpha_{u}\lambda_{0}\}},
		\end{equation*}
		where
		\begin{displaymath}
			\mathbf{1}_{\{\alpha_{\ell}\lambda_{0} < y_{k} < \alpha_{u}\lambda_{0}\}}  = \begin{cases}
				1,\quad & \alpha_{\ell}\lambda_{0} < y_{k} < \alpha_{u}\lambda_{0},\\
				0,\quad & \text{otherwise},
			\end{cases} 
		\end{displaymath}
		and find the normalized eigenvector $\boldsymbol{\nu}_{in}$ of $\boldsymbol{S}_{in}$ regarding its largest standard eigenvalue.
		\STATE The spectral initialization can be obtained as $\bz_{0} = \lambda_{0}  \cdot \boldsymbol{\nu}_{in}$.
		\FORALL {$i = 0,1,\ldots T-1$}
		\STATE Compute 
		\begin{displaymath}
			\nabla f(\bz_{i}) = \frac{1}{n} \sum_{k = 1}^{n} \left( 1 - \frac{y_{k}}{\ABS{\bal_{k}^{*} \bz_{i}}}  \right) \bal_{k}\bal_{k}^{*}\bz_{i},
		\end{displaymath}
		\STATE Update $\bz_{i+1} = \bz_{i} - \eta \cdot \nabla f(\bz_{i}).$
		\ENDFOR
		
		\ENSURE $\bz_{T}$; % output 的内容
	\end{algorithmic}
\end{algorithm}

\section{Quaternionic Perturbed Amplitude Flow (QPAF)}\label{p5}
The TAF, TWF, and RAF algorithms for complex signals, as previously discussed, lack a comprehensive theoretical analysis. To address this issue, a new model was introduced
in \cite{gswx}, defined as
\begin{equation}\label{paf1}
	\min_{\bz} f_{\varepsilon}(\bz)=\min_{\bz}\frac{1}{n}\sum_{j=1}^{n}\left(\sqrt{|\bal_{j}^{*}\bz|^{2}+{\varepsilon_{j}^{2}}}-\sqrt{b_{j}^{2}+\varepsilon_{j}^{2}} \right)^{2},
\end{equation}
where $\boldsymbol{\varepsilon}=[\varepsilon_{1}, \cdots, \varepsilon_{n}]\in \mathbb{R}^{n}$ is a vector with prescribed values, subject to the condition that $\varepsilon_{j}\neq 0$ for all $b_{j}\neq 0$. When  $\varepsilon_{j}=0$ for all $j=1,\cdots, n$, {this model reduces to the complex amplitude-based model in \cite{wang_solving_2018}}, and is therefore referred to as the perturbed amplitude-based model. Based on this formulation, a new non-convex algorithm, called the perturbed Amplitude Flow (PAF) is developed. The PAF algorithm can recover the target signal under $\mathcal{O}(d)$ Gaussian random measurements. Starting from  a well-designed initial point,  PAF converges at a linear rate for both real and complex signals, without requiring  truncation or reweighted procedures, while maintaining strong empirical performance. 

In this section, we extend this framework to  the Quaternionic Perturbed Amplitude Flow (QPAF) algorithm, of course utilizing  the generalized $\mathbb{HR}$ calculus.  Numerical experiments show that QPAF performs   comparably to  PRAF, and significantly outperforms  QTWF and QWF (see Section \ref{p4}). Crucially, the  primary advantage of the model \eqref{paf1}  is preserved when  generalized to the quaternionic setting. Specifically,  with an appropriate choice of $\varepsilon$,  the  gradient $\nabla f_{\boldsymbol{\varepsilon}}(\bz_{i})$ can be effectively controlled in the neighbourhood of the initial guess. Moreover,  the gradient $\nabla f_{\boldsymbol{\varepsilon}}(\bz_{i})$  satisfies  {\rm LRC}$(\mu, \lambda, \varepsilon)$, addressing the primary theoretical limitation of QRAF, see Problem \ref{prop1}. Consequently, QPAF has a linear convergence rate. Since the proof closely mirrors the original,  except with  special attention given to the order of quaternion multiplication,  the full details are omitted here for brevity. The QPAF algorithm is provided in the following Algorithm \ref{Alg_QPAF}.

\begin{algorithm}[!ht]
	\caption{Quaternionic Perturbed Amplitude Flow (QPAF)}
	\label{Alg_QPAF}
	\begin{algorithmic}[1] % 控制是否有序号
		\REQUIRE Data $\left(\bal_{k}, \psi_{k}=\ABS{\bal_{k}^{*}\bx}\right)_{k=1}^{n}$, step size $\eta$, weighting parameter $\gamma$, the control coefficient $\sigma$ and the iteration number $T$;%输入参数
        
		\STATE Compute $\lambda_{0} = \left(\frac{1}{n}\sum_{k = 1}^{n} \psi_{k}^2\right)^{1/2}$.
		\STATE Construct the quaternionic Hermitian matrix
		\begin{displaymath}
			\boldsymbol{S}_{in} = \frac{1}{n} \sum_{k = 1}^{n} \left(\gamma - e^{-\psi_{k}^2 / \lambda_{0}^{2}}\right)\bal_{k}\bal_{k}^{*}, 
		\end{displaymath}
		and find its normalized eigenvector $\boldsymbol{\nu}_{in}$ regarding its largest standard eigenvalue.
		\STATE Compute the initialization $\bz_{0} = \lambda_{0}  \cdot \boldsymbol{\nu}_{in}$..
		\FORALL {$i = 0,1,\ldots T-1$}
		\STATE Compute
		\begin{equation} \label{eq_nabula_qpaf}
			\nabla f_{\boldsymbol{\varepsilon}}(\bz_{i}) = \frac{1}{n} \sum_{k = 1}^{n} \left( 1 - \frac{\sqrt{\psi_{k}^{2} + \boldsymbol{\epsilon}_{k}^{2}}}{\sqrt{\ABS{\bal_{k}^{*} \bz_{i}}^2+ \boldsymbol{\epsilon}_{k}^{2}}}  \right) \bal_{k}\bal_{k}^{*}\bz_{i},
		\end{equation}
		where $\boldsymbol{\varepsilon} = \sqrt{\sigma} \boldsymbol{\psi}$, in which $\boldsymbol{\psi} = \{ \psi_{k}\}$, $k = 1,2,\ldots,n$.
		\STATE Update  $\bz_{i+1} = \bz_{i} - \eta \cdot \nabla f_{\boldsymbol{\varepsilon}}(\bz_{i}).$
		\ENDFOR
		
		\ENSURE $\bz_{T}$; % output 的内容
	\end{algorithmic}
\end{algorithm}

\section{Algorithm regarding to RGB picture}\label{p6}

In quaternion  image processing, the color channels (Red, Green, and Blue channels) are usually represented  by the three imaginary components of quaternions, making the desired signal taking pure quaternions. Many studies (see e.g., \cite{cxz}) opt to eliminate the real part of the quaternion signals after  reconstruction. However, this approach  generally  performs poorly in the context of phase retrieval  for pure quaternion signals. A more effective approach is to incorporate the pur quaternion  assumption directly into the recovery process, as done in  \cite{chen_phase_2023}. In this section, we  brief  review  of the  Phase Factor Estimate (QPFE) technique  used in \cite{chen_phase_2023},  which will be applied to refine our algorithms introduced in previous sections and then applied to phase retrieval of natural images. 

Recall that for a quaternion number $q=q_{0}+q_{1} i+q_{2}j+q_{3}k$, we call $\mathcal{R}(q) = q_{0}$ the real part of $q$ and $\mathcal{P}^{\theta}(q)$, $\theta = i, j, k$ as the imaginary parts of $q$, i.e. $\mathcal{P}^{i}(q) = q_{1}$, $\mathcal{P}^{j}(q) = q_{2}$ and $\mathcal{P}^{k}(q) = q_{3}$. Let ${\bq}$ be a quaternion signal in $ \mathbb{H}^{d}$, the projection $\mathcal{V}$  splits this vector to its real and imaginary parts, i.e.  $\mathcal{V}({\bq}) = \left[ \mathcal{R}(\bq), \mathcal{P}^{i}(\bq), \mathcal{P}^{j}(\bq), \mathcal{P}^{k}(\bq)\right] \in \mR^{d\times4}$. Furthermore, when $q_{0} = 0$,  the quaternion number $q=q_{1} i+q_{2}j+q_{3}k$ is called by pure quaternion. Then, for a pure quaternion signal ${\bp}\in \mathbb{H}^{d}$, we have $\mathcal{V}({\bp}) = \left[ 0, \mathcal{P}^{i}(\bp), \mathcal{P}^{j}(\bp), \mathcal{P}^{k}(\bp)\right] \in \mR^{d\times 4}$.

The Phase Factor Estimate is not a complete algorithm for Phase retrieval but rather a simple yet useful technique that transforms a full quaternion signal to a pure quaternion  with  minimal difference. The strategy of QPFE is to find a quaternion phase factor $q$ with $\|q\|=1$ such that $\bz q$ is closest to pure quaternion signal, i.e., 
\begin{equation} \label{pf12}
	\hat{q}=\arg \min_{\|q\|=1}\| \textrm{Re} (\bz q) \|
\end{equation}
and then we map $\bz$ to the imaginary part of $\bz \hat{q}$. The above optimization problem \eqref{pf12} can be easily solved by spectrum method after transformed to an equivalent real matrix. The detailed process  is summarized in  Algorithm \ref{QPFE}. It is important to note that in Algorithm 3 in \cite{chen_phase_2023}, the $q_{(i+1)T_{p}}$  should be its quaternion conjugate $\overline{q_{(i+1)T_{p}}}$.

\begin{algorithm}[!ht]
	\caption{Quaternionic Phase Factor Estimate (QPFE)}
	\label{QPFE}
	\begin{algorithmic}[1] % 控制是否有序号
		\REQUIRE An array $\bz\in \mathbb{H}^{d}$;%输入参数
		
		\STATE Split the full quaternionic array $\bz$ to a real matrix $\boldsymbol{M} \in \mR^{d \times 4}$.
		\STATE Compute the eigenvalue and eigenvector of $\boldsymbol{W} = \boldsymbol{M}^{T}\boldsymbol{M}$ which is a $4 \times 4$ matrix, obtain the eigenvector $\boldsymbol{v}$ of $\boldsymbol{W}$ corresponding to its smallest eigenvalue.
		\STATE Let $q = v_{1} +  v_{2} i +  v_{3}j +  v_{4}k$ be the phase factor and obtain its quaternionic conjugation $\bar{q} = v_{1} -  v_{2} i -  v_{3}j - v_{4}k$.
		\FORALL {$i = 0,1,\ldots T-1$}
		\STATE Compute the left  product of $\bz$ by $\bar{q}$, i.e., $\boldsymbol{p} = \bz \cdot \bar{q}.$
		%\begin{displaymath}
		%	\boldsymbol{p} = \bz \cdot \bar{q}.
	%	\end{displaymath}
		\STATE Let $\bw$ be the imaginary part of $\boldsymbol{p}$.
		\ENDFOR
		
		\ENSURE $\bz_{T}$; % output 的内容
	\end{algorithmic}
\end{algorithm}

Under suitable conditions, the pure quaternion signals can be exactly recovered as shown in \cite{chen_phase_2023}. Thus, in this case, it is more natural  to adopt the distance ${\rm dist}_{p}\left(\bw, \bz\right)$  to measure the reconstruction error of pure quaternion signals, defined as
\begin{equation} \label{dist_p}
	{\rm dist}_{p}\left(\bw, \bz\right) = \min\left\{ \NORM{\bz + \bw}, \NORM{\bz-\bw} \right\}.
\end{equation}
In the following experiments, we always use this distance to measure the pure quaternion signal unless stated otherwise.

At the end of this section, we provide a combination of QPFE and QRAF for pure quaternion signals, referred to as PQRAF. Combinations with other algorithms used in the Section \ref{p7} can be formulated similarly.

\begin{algorithm}[!ht]
	\caption{Pure Quaternionic Reweighted Amplitude Flow (PQRAF)}
	\label{Alg_PQRAF}
	\begin{algorithmic}[1] % 控制是否有序号
		\REQUIRE Data $\left(\bal_{k}, \psi_{k}=\ABS{\bal_{k}^{*}\bx}\right)_{k=1}^{n}$, step size $\eta$, weighting parameters $\beta$, subset cardinality $\ABS{\mathcal{S}}$, exponent $\gamma$ and the iteration number $T$. Moreover, the step parameter $T_{p}$;%输入参数
		
		\STATE Initialization as in Algorithm (\ref{Alg_QRAF}) to obtain $\bz_{0} = \lambda_{0}  \cdot \boldsymbol{\nu}_{in}$.
		\FORALL {$i = 0,1,\ldots T-1$}
		\STATE Compute $\nabla \ell_{\rm rw}(\bz_{i})$ and update $\bz_{i+1}$ as in Algorithm \ref{Alg_QRAF}.
		\IF{$\mod(i,T_p) = 0$}
		\STATE Compute $\bw_{i+1} \in \mathbb{H}^{d}_{p}$ by calling QPFE as in Algorithm \ref{QPFE} with the input $\bz_{i+1}$.\\
		\STATE Update $\bz_{i+1} = \bw_{i+1}$.
		\ENDIF
		\ENDFOR
		
		\ENSURE $\bz_{T}$; % output 的内容
	\end{algorithmic}
\end{algorithm}
Similar to \cite{chen_phase_2023}, we use $T_{p}$ to be the parameter that controls the transform of full quaternion to pure quaternion signals. Unlike \cite{chen_phase_2023} which utilizes inner and outer loop for the result, we only perform QPFE once after $T_{p}$ iteration periodically. In our algorithm, the total iteration number is the given parameter $T$ compare to the actual iteration number $T \times T_{p}$ in \cite{chen_phase_2023}. It is easily to see that the parameter $T_{p}$ makes difference when it changes, especially when $T_{p} = 1$ which means operate QPFE in every iteration and $T_{p} = T$ which means operate QPFE at the final step of the full quaternion algorithm. Although there are difference among them, the  actual computation shows no essential distinction. Since our work considers mainly on different algorithms rather than different aspects of a single algorithm, we prefer to save this part of work in the future.  

\section{Numerical Experiments}\label{p7}
In this section, we show the numerical efficiency of QRAF and its improvements by  comparing their  the performance with other competitive methods. %Specifically,  Section \ref{sd1}
%is for synthetic data, while Section \ref{cip2} focuses on color image processing. Additionally,  Section \ref{crq1} provides a comparative analysis of conventional and quaternion-based algorithms.
Throughout our work, the parameters of the tested algorithms are listed in Table \ref{Table_parameters}. For the existing quaternionic algorithm QWF, we use the given parameters there. For the algorithms that transferred from the real or complex space, we compare the original parameters with a huge a mount of tested parameters to obtain a suitable set of parameters.

\begin{table}[!htbp]
	\caption{Parameters in algorithms}%标题
	\centering%把表居中			
	\begin{tabular}{ll|ll}
		\toprule
		Algorithm & Parameters & Algorithm & Parameters \\
		\midrule
		QWF     & $ \eta = \frac{0.2n}{\sum_{k=1}^{n}y_{k}}$;&QRAF    & $\ABS{\mathcal{S}} = \lfloor 3 n / 13\rfloor$, $\beta = 5$, $\gamma =0.5$ and $\mu = 6$\\ 
		QRWF    & $\alpha_{\ell} = 1$, $\alpha_{u} = 5$ and $\mu = 0.8$;&QARAF   & $\eta = 6$ and $\mu = 0.8$;\\
		QPAF    & $\gamma = 1/2 $, $\eta = 2.5$ and $\sigma = 2$;&QAdRAF  & $\alpha = 0.009$, $\mu = 1$, $\epsilon = 10^{-6}$\\
		QTAF    & $\gamma = 0.8 $, $\eta = 1.2$ and $\rho = \frac{1}{6}$&QIRAF   & $\eta = 6$ and batch size $\ABS{\Gamma} = 2^{k}$, $k = \lfloor \frac{\log\left(\frac{n}{4}-1\right)}{\log 2}\rfloor$;\\
		\bottomrule
	\end{tabular}
	\label{Table_parameters}
\end{table}

\subsection{Synthetic Data} \label{sd1} This subsection presents a performance comparison of various quaternionic algorithms, evaluated through experimental results. In each trial, we employ a Gaussian measurement ensemble  $\bA\sim \mathscr{N}_{\mathbb{H}^{n\times d}}$. The entries of quaternion signal $\bx\in \mathbb{H}^{d}$  are independent and identically distributed (i.i.d.) as $\mathscr{N}(0, 1) +\sum_{v=i, j, k}\mathscr{N}(0, 1)v$. The signal $\bx$ is  subsequently normalized to satisfy $\|\bx\|= 1.$  Note that to approximate the leading eigenvector  of the Hermitian matrix, we perform 100 power iterations. The power method (see e.g., \cite{LWZZ}) used for quaternion Hermitian matrix  is analogous to the approach used in the complex case.

{\bf i. Convergence performance} \, Firstly, we examine the convergence of each algorithm, as shown in Fig.\,\ref{Pic_Converg_compare} and Fig.\,\ref{Pic_Converg_compare_PQ}. The relative error, defined as $\text{dist}(x,z)/\NORM{x}$ and $\text{dist}_{p}(x,z)/\NORM{x}$ w.r.t. full and pure quaternion algorithms respectively, serves as the convergence criterion.  It is seen that all  algorithms are convergent albeit at different rates. The convergence rate is a crucial  aspect  and  is influenced not only by the algorithm's gradient but also by other factors, such as the vector dimension $d$, the ratio $n/d$, and the algorithm's initialization.
\begin{figure}[htbp]
	\centering
	\begin{minipage}{0.45\textwidth}
		\centering
		\includegraphics[width=1.0\textwidth]{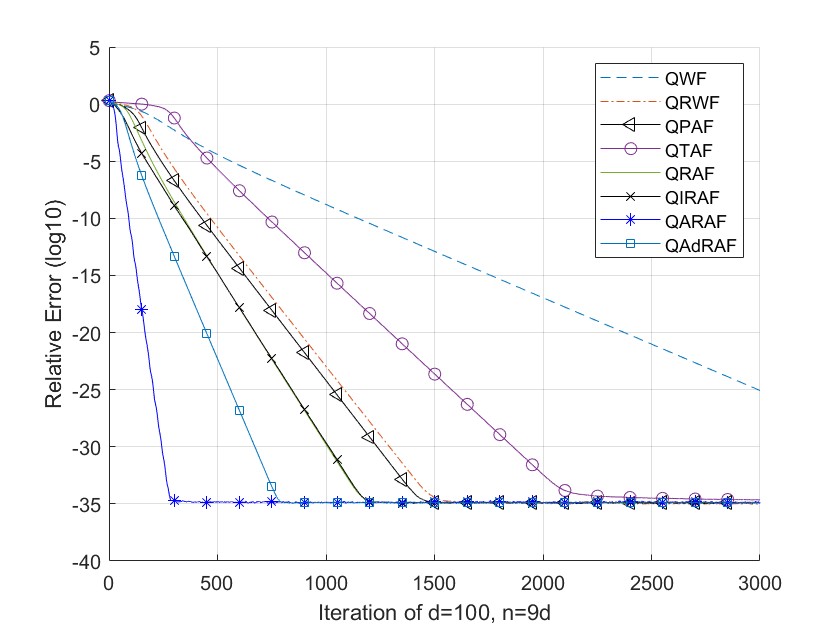}
		\caption{Simulations of different algorithms on quaternion-valued signals}
		\label{Pic_Converg_compare}
	\end{minipage}
	\hfill
	\begin{minipage}{0.45\textwidth}
		\centering
		\includegraphics[width=1.0\textwidth]{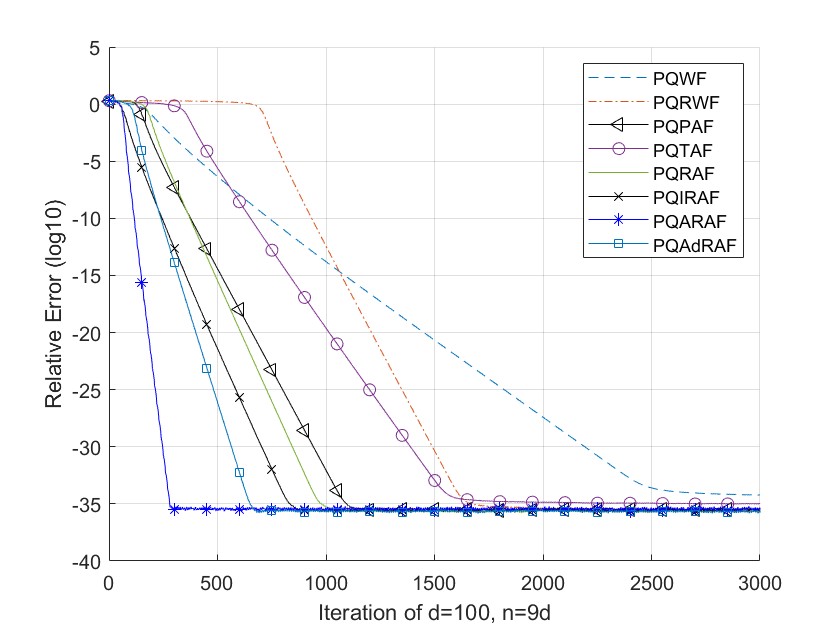}
		\caption{Simulations of different algorithms on signals of pure quaternions}
		\label{Pic_Converg_compare_PQ}
	\end{minipage}
\end{figure}

{We fix the oversampling rate to $n/d = 9$ as an example to present the convergence analysis.} It is easy to see that in general all algorithms converge to the same level of relative error, i.e. $\ln(10^{-35})$, but there are some details that must be noted. For the full quaternion algorithms in Fig.\,\ref{Pic_Converg_compare}, QARAF leads the competition of convergence with the iteration number less than $400$. QRAF and its variants converge evidently faster than other algorithms. The QWF also converges but with a lower speed that is not fully shown in the figure. We should note that the curves of QRAF and QIRAF almost cover each other, showing the same ability of convergence. This is actually important since as the variants of QRAF, QIRAF has the advantage of computation speed which we will discuss later in details.

Similar to the full quaternion algorithms, the pure ones have the same level and order of convergence. Note that with the assist of QPFE, the gradients vary on certain area, depending on the merit of algorithms themselves. But in general, the full quaternion algorithms and the pure ones admit the same level of performance. 

{\bf ii. Success rate}\, {We next evaluate the success rates of QRAF and its variants, along with other benchmark algorithms, under varying $n/d$ rate. The \textbf{oversampling rate $n/d$} is a critical factor in phase retrieval, as lower values imply reduced computational cost, faster execution, and lower memory consumption. Therefore, achieving successful recovery at smaller $n/d$ is a key indicator of algorithmic efficiency. }

{We conduct experiments on quaternion-valued signals $\bx \in \mathbb{H}^d$ with $d = 64$ and $d = 100$, respectively. For each configuration, we perform 100 trials over $n/d \in [[3, 13]]$ with increments of $0.5$. Noting that the interval $n/d \in (5,8)$ is particularly sensitive, we reduce the step size to $0.2$ within this range to capture more granular performance behavior. Each algorithm was run for 1500 iterations per trial. Similar as in \cite{chen_phase_2023, wang_phase_2018}, a trial was deemed successful if it achieved $\text{dist}(\bx,\bz)<10^{-5}$ within the 1500 iterations. The hyperparameter settings for each algorithm are listed in Table~\ref{Table_parameters}, and the resulting success rates are reported in  Fig. \ref{Pic_success_rate_A} and~\ref{Pic_success_rate_B}. The most notable performance is achieved by QARAF, which reaches a success rate of $1.0$ with an oversampling ratio below $n/d = 7$, clearly outperforming all other algorithms. This superior performance is attributed to the accelerated steepest descent strategy adopted in QARAF, as detailed in Section \ref{sub_sec_QARAF}.}

\begin{figure}[htbp]
	\centering
	\begin{minipage}{0.45\textwidth}
		\centering
		\includegraphics[width=1.0\textwidth]{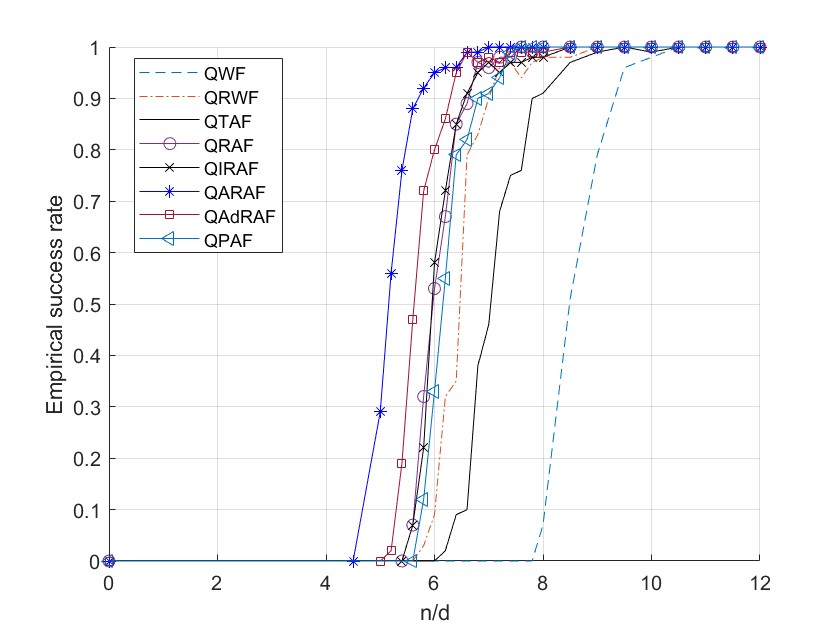}
		\caption{Success rate for $d = 64$}
		\label{Pic_success_rate_A}
	\end{minipage}
	\hfill
	\begin{minipage}{0.45\textwidth}
		\centering
		\includegraphics[width=1.0\textwidth]{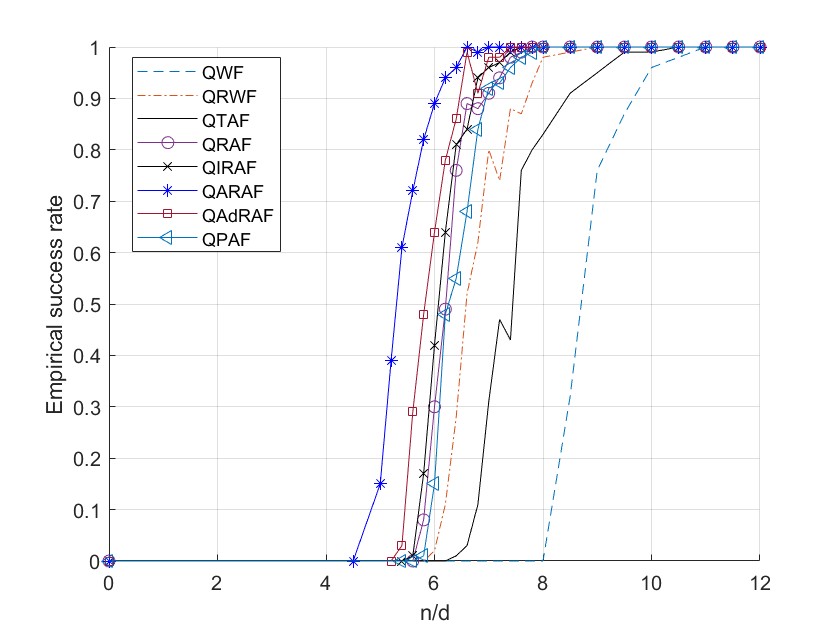}
		\caption{Success rate for $d = 100$}
		\label{Pic_success_rate_B}
	\end{minipage}
\end{figure}

%\textcolor{red}{As illustrated in Figures~\ref{Pic_success_rate_A} and~\ref{Pic_success_rate_B}, the most notable performance is achieved by QARAF, which reaches a success rate of $1.0$ with an oversampling ratio below $n/d = 7$, clearly outperforming all other algorithms. This superior performance is attributed to the accelerated steepest descent strategy adopted in QARAF, as detailed in Section~\ref{sub_sec_QARAF}.}

QAdRAF, which incorporates an adaptive gradient descent mechanism, also demonstrates improved performance over both the baseline QRAF and other methods in the benchmark group shown in Fig.~\ref{Pic_success_rate_A}. Particularly noteworthy is the performance of QIRAF. While its success rate closely matches that of QRAF, it substantially reduces computational overhead by employing a mini-batch strategy. Despite this simplification, QIRAF retains nearly the same recovery capability, making it an attractive alternative for practical implementations. As shown in Table~\ref{t6a}, QIRAF achieves a comparable runtime to QARAF, thus offering an effective trade-off between accuracy and efficiency.

Two important remarks apply to the interpretation of success rate curves. First, since success is defined solely by reaching $\text{dist}(\bx,\bz)<10^{-5}$ within 1500 iterations, the curves do not distinguish between fast and slow convergence within a trial. Second, small fluctuations or inflection points in the curves (e.g., due to randomness in measurement matrices) are expected, especially when $n/d$ varies in small increments (e.g., 0.2). These do not affect the overall trends and conclusions, and increasing the number of trials or step sizes would smooth the curves further. Therefore, the current setting is considered sufficient for comparative evaluation.

Although the oversampling rate $n/d$ is crucial in Phase Retrieval, it is not the only factor that matters. The length of the original signal $d$ also plays an important role in algorithm performance, in this case the success rate. To further explore this, we conducted additional trials on QRAF and its variants with varying dimensions, setting $d=30:30:300$ and $n/d=3:0.5:10$. For each parameter pair $(d,n/d)$, 30 trials were conducted to calculate the success rate, i.e. achieve $\text{dist}(x,z) < 10^{-5}$ within 1500 iterations. Notably, when $d = 100$, the slices of the following figures align with the curves in Fig. \ref{Pic_success_rate_B} respect to the corresponding algorithm.

\begin{figure*}[!t]
	\centering
	\subfloat[QRAF]{\includegraphics[width=2.0in]{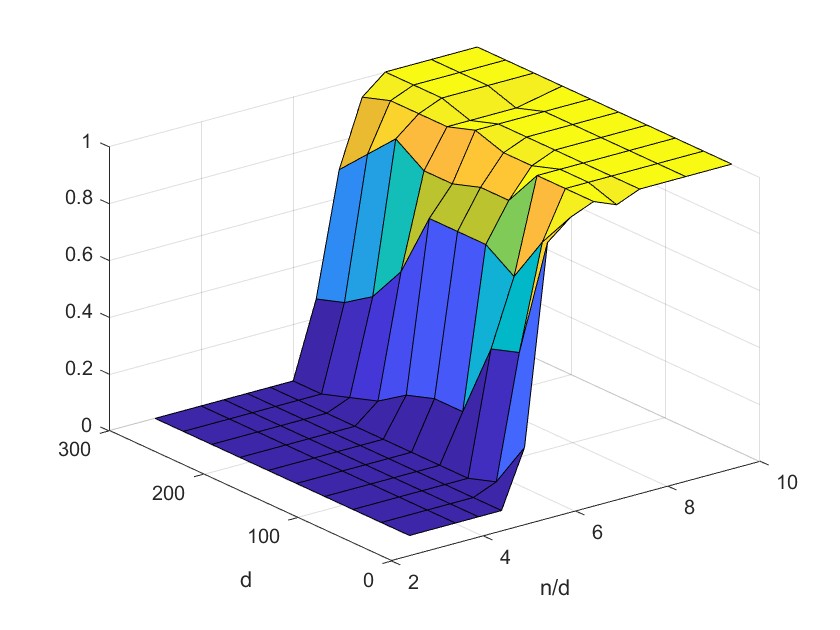}%
		\label{pic_QRAF}}
	\hfil
	\subfloat[QIRAF]{\includegraphics[width=2.0in]{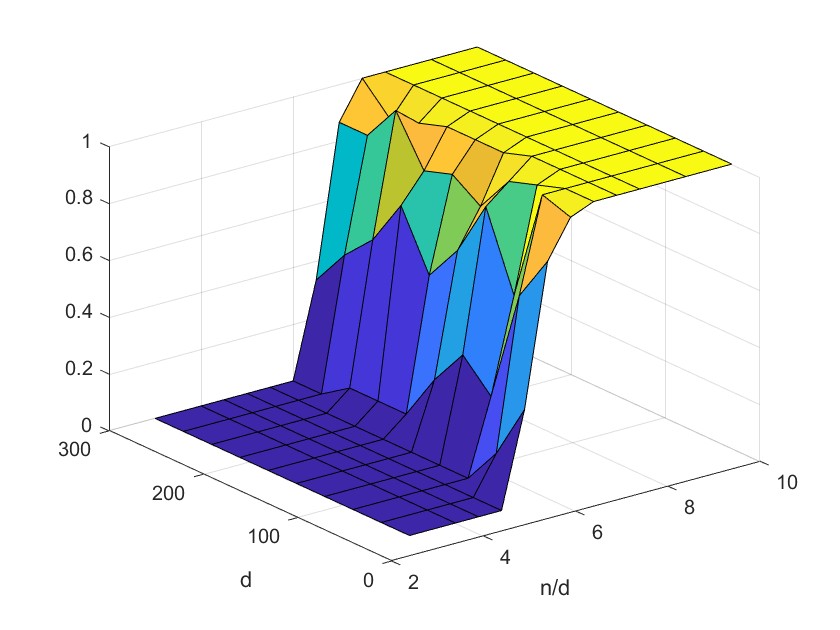}%
		\label{pic_QIRAF}}

	\subfloat[QARAF]{\includegraphics[width=2.0in]{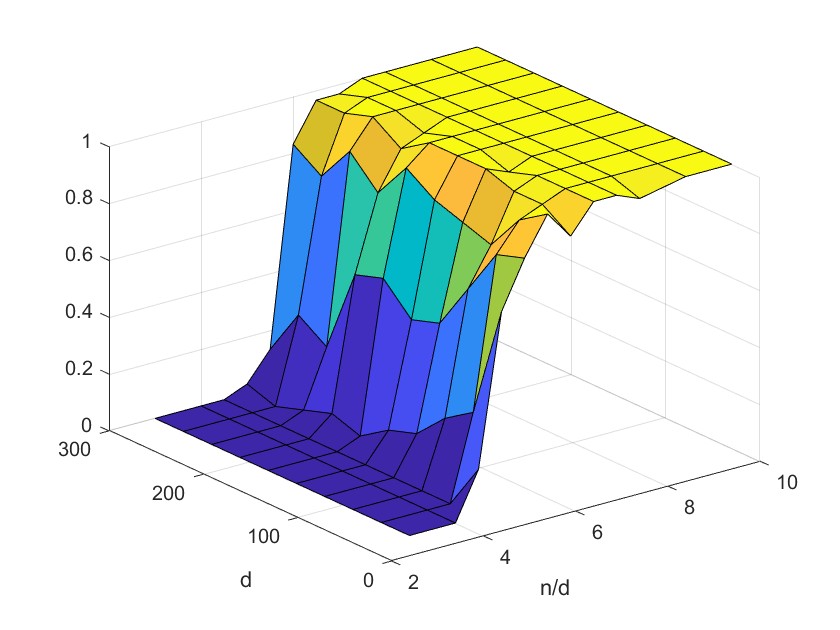}%
		\label{pic_QARAF}}
	\hfil
	\subfloat[QAdRAF]{\includegraphics[width=2.0in]{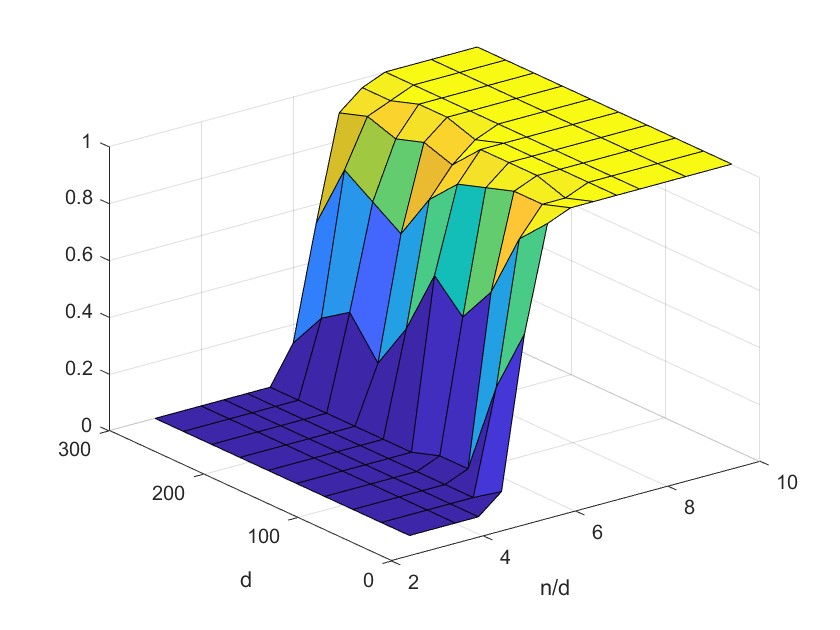}%
		\label{pic_QAdRAF}}
	\caption{Success rate of QRAF and its variants in various views}
	\label{fig_3D}
\end{figure*}

From Fig. \ref{fig_3D}, we can easily seen that the signal dimension also affect tested algorithms on convergence at some level. For instance, when $d=30$ and $n/d=6$, the success rate of QIRAF is $0.83$ while when $d=300$ the success rate is $0$ with the same $n/d$. Another example is when $d = 60$ and $n/d=6.5$ the success rate is $1$, but when $d=300$ and $n/d=6.5$, the success rate is $0.33$. (The original data is uploaded and can be verified.) 
In another word,  as $d$ increases, the ratio  $n/d$ must also be larger to achieve a high success rate. It is worth noting that, compared to  QRAF,  the QIRAF is more sensible to the size of vector $\bx\in \mathbb{H}^{d}$. And the QARAF is the most stable one, comparing to its counterparts.

{\bf iii. Compete test for quaternionic algorithms}\, Table \ref{t6a} provides an overview of the features of different algorithms. For these experiments, the parameters were set according to Table \ref{Table_parameters} and set $n/d = 9$, with $d = 64$ and $d = 100$ separately. Each algorithm was run for $100$ trials with the results averaged. It should be noted that for algorithms that success rate did not reach $1$, the average convergence time and iterations exclude the cases of non-converging.

\begin{table*}[htbp]
	\caption{Comparison of iteration count and time cost among quaternionic algorithms} \label{t6a}%标题
	\centering%把表居中
 \resizebox{\textwidth}{!}{
	\begin{tabular}{|l|c|c|c|c|c|c|c|c|c|c|}
		\hline
		Algorithm: & $d$ 				 & QWF			& QRWF		& QPAF		& QTAF 		& QRAF 			 &QIRAF      &QARAF      &QAdRAF\\
		\hline
		Success rate: &\multirow{3}*{64} & 0.81 		& 0.99 		&\textbf{1}	&\textbf{1} & \textbf{1} 	&\textbf{1}      &\textbf{1}     &\textbf{1}\\
		
		Iterations:&  					 &1280.12 		& 518.84 	&457.04		& 734.08	& 378.86 		& 382.50 			& \textbf{100.99} 	& 249.44 \\
		
		Time (s): &						 &93.52 		& 34.82 	&29.23		&52.54	 	& 27.45 	 & 11.32 			& \textbf{7.01} 	& 16.32\\
		\hline
		Success rate: &\multirow{3}*{100}&0.74 			& \textbf{1}&\textbf{1}	& 0.96 		& \textbf{1} & \textbf{1} 		& \textbf{1} 		& \textbf{1}\\
		
		Iterations:&  					 &1326.47  		& 594.07 	&501.27		& 824.76	& 409.63	 & 389.74 			& \textbf{106.85} 	& 272.92\\
		
		Time (s):&  					 &244.97 		&101.16		&86.10		&140.12 	& 68.02	 	& 19.11 			& \textbf{18.86} 	& 48.59\\
		\hline
	\end{tabular}}
	
\end{table*}

The performance of quaternionic Wirtinger Flow and quaternionic Amplitude Flow algorithms are exhibited in Table \ref{t6a}. Consider together with Fig. \ref{Pic_success_rate_A} and Fig. \ref{Pic_success_rate_B} when $d=100$, it is easily seen that QRAF and its variants surpasses other kinds algorithms in this table not only in success rate but also in convergence step and time consumption. This overall advantage is evident in both cases of $d=64$ and $d=100$.

When it comes to QRAF and its variants themselves, the competition becomes much intenser, but QARAF wins without doubts. All QRAF-type algorithms have a success rate of $1$ but QARAF has lower convergence step number and less time consumption by the acceleration technique. Note that QARAF is less influenced by the size of the signal, from $100.99$ to $106.85$ as $d$ increases from $64$ to $100$, comparing to other algorithms. This also aligns the conclusion of Fig. \ref{fig_3D}. 

Another outstanding algorithm is QIRAF, which must be studied carefully. Consider Table \ref{t6a} and Fig.\,\ref{Pic_Converg_compare} and Fig. \ref{Pic_success_rate_B} together, we can easily observe that the performance of QIRAF and QRAF are much similar. However, QIRAF enjoys much less computation than QRAF by the mini-batch technique without loosing much advantage of gradient. This is an essential virtue in practical.

Meanwhile, though QIRAF does not outperform QARAF when $d$ equals to $64$ or $100$ in both iteration count and time consumption, it can not be simply deemed as a complete fail to the latter. While QARAF gets the result of less time consumption by less convergence step, QIRAF achieved the same level of less time consumption by less computation. In another word, the average time per step of QIRAF is less than QARAF. Therefore, when $d$ gets larger (for instance when $d > 200$), QIRAF will use less time than QARAF in the same task while success rate still be $1$.

One more thing needs to be mentioned is that, QARAF having such an outstanding performance comes with a cost of algorithm complexity and CPU memory. Although the cost is not expensive enough to be considered essential, it can be clearly noticed in practical in time-consumption per iteration step. This is actually the secondary factor that QARAF and QIRAF have competition in time consumption, while QARAF is far beyond QIRAF in convergence step. Algorithm performance is always about the balance of time consumption, computation and machine memory.

\subsection{Experiment: Color Images} \label{cip2}
In this section, we compare different quaternionic algorithms in processing real images. The images used for testing are sourced from \textbf{The USC-SIPI Image Database} \footnote{https://sipi.usc.edu/database/}, i.e. 4.1.04 ($256 \times 256$), 4.1.05 ($512 \times 512$) and 4.2.03 ($256 \times 256$). Ideally, the entire image would be considered as the signal for reconstruction. However, due to computational limitations present not only in quaternionic algorithms but also in real and complex algorithms, we adopt the conventional approach of segmenting each image into smaller blocks and treating each block as a separate signal $\bx$ to be reconstructed. Specifically, in the first experiment, i.e. Fig. \ref{QWFsvsQAFs}, we test the picture 4.1.04 in the database as in \cite{chen_phase_2023}. The picture is $256 \times 256$, we split this picture into $32 \times 32$ pieces, then each piece is a $8 \times 8$ block containing $64$ pixel which is the signal $\bx$ and its size $d = 64$. In the second experiment, i.e. Fig. \ref{PQRAFs}, the figure 4.1.05 of the size $512 \times 512$ is also tested. The figure is divided also into  $32 \times 32$ pieces but with each block of $16 \times 16$ pixels. Therefore, in this experiment, the size of the signal $\bx$ is $d = 256$.

For these experiments, we set the $n/d$ ratio to $9$ and fixed the total number of iterations to $T = 300$ for each algorithm. The outcomes are presented in Fig. \ref{QWFsvsQAFs}. The results in Fig. \ref{QWFsvsQAFs} clearly highlight the performance of each algorithm. In comparison with the original image, only the reconstruction by PQRAF (see Fig. \ref{PQRAF_re}) contains no defective blocks, indicating the robustness of this algorithm. PQPAF and PQRWF also perform well, with only a few defective blocks, reflecting the varying strengths of WF and AF. Conversely, PQWF and PQTAF exhibit relatively weaker performance, illustrating that different WF and AF algorithms have trade-offs in terms of convergence speed and overall effectiveness.

\begin{figure*}[!t]
	\centering
	\subfloat[Original (PSNR,SSIM)]{\includegraphics[width=1.2in]{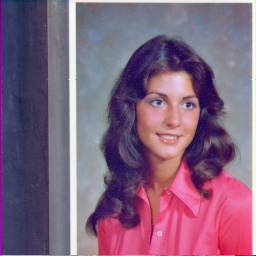}%
		\label{PQ_original}}\,
	\subfloat[PQWF (17,0.61)]{\includegraphics[width=1.2in]{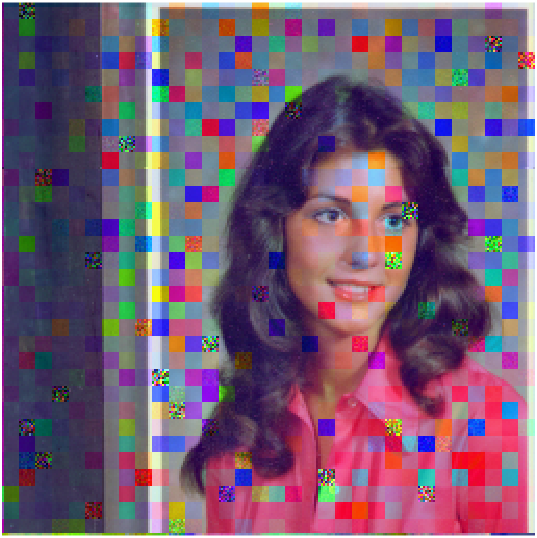}%
		\label{PQWF_re}}\,
	\subfloat[PQRWF (31,0.99)]{\includegraphics[width=1.2in]{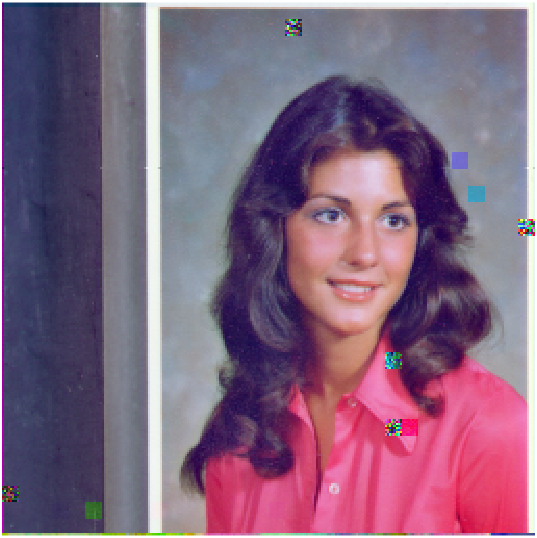}%
		\label{PQRWF_re}}
	
	\subfloat[PQPAF (45,1)]{\includegraphics[width=1.2in]{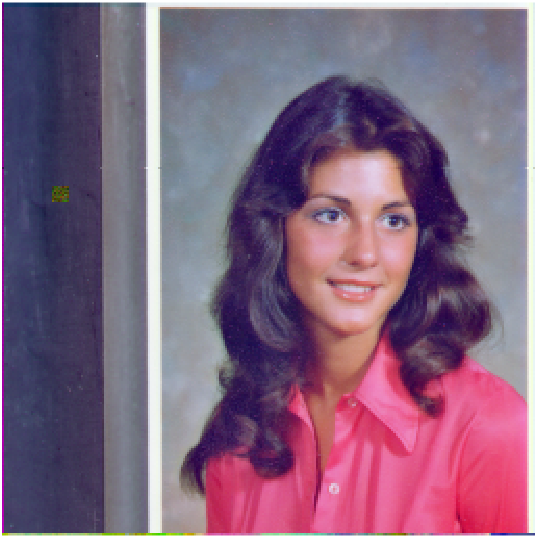}%
		\label{PQPAF_re}}\,
	\subfloat[PQTAF (25,0.92)]{\includegraphics[width=1.2in]{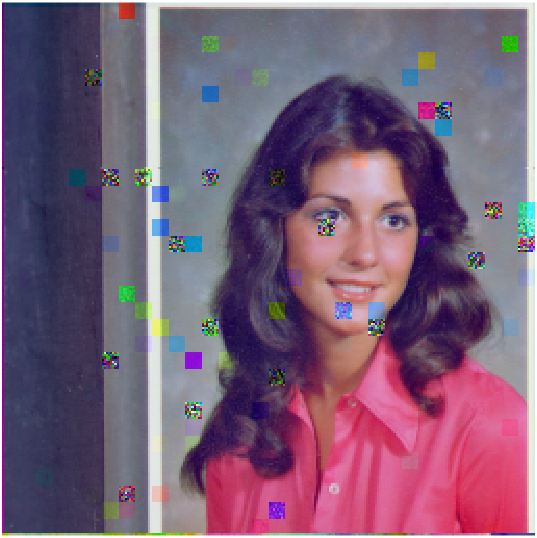}%
		\label{PQTAF_re}}\,
	\subfloat[PQRAF (96,1)]{\includegraphics[width=1.2in]{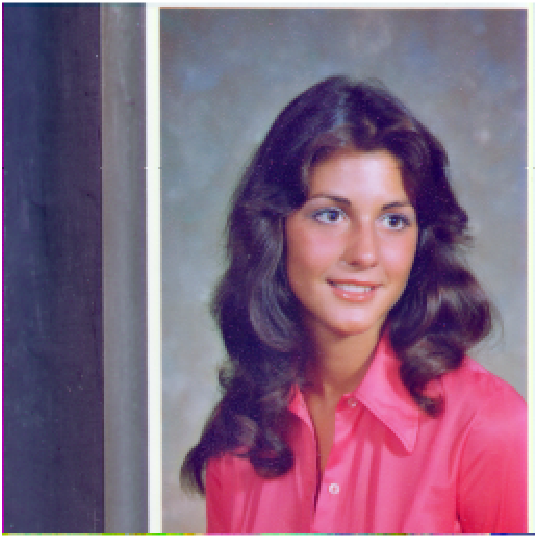}%
		\label{PQRAF_re}}
	\caption{Comparison of PQWFs and PQAFs}
	\label{QWFsvsQAFs}
\end{figure*}

To quantify these observations, we utilize two evaluation metrics: Peak Signal-to-Noise Ratio (PSNR) and Structural Similarity Index Measure (SSIM). PSNR measures the similarity between the original and reconstructed images, with higher values indicating closer similarity. SSIM assesses structural similarity, ranging from $0$ to $1$, where a score of $1$ denotes perfect similarity. The indices reveal that PQRAF significantly outperforms other algorithms, achieving the highest PSNR score of $96$ in this experiment. Additionally, both PQRAF and PQPAF attain the maximum SSIM score of $1$, leading the competition in this aspect. These indices corroborate the earlier qualitative observations with a more precise, quantitative evaluation.

In the subsequent experiment, we operate algorithms of QRAF and its variants, namely, PQARAF, PQAdRAF and PQIRAF to compare their performance (PQWF as control group). The setting of this experiment is aforementioned and the results are displayed in Fig. \ref{PQRAFs}. It is noteworthy that the QRAF-based algorithms share the same initialization method, therefore gradients are the only difference among these algorithms.

\begin{figure*}[!t]
	\centering
	\subfloat[Original (PSNR,SSIM)]{\includegraphics[width=1.2in]{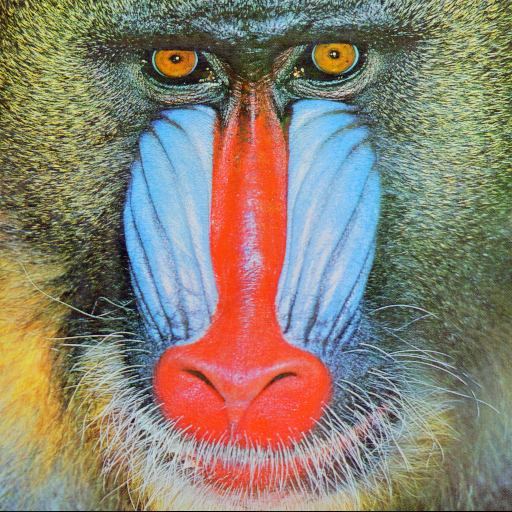}%
		\label{pic_trial_2_0}}\,
	\subfloat[PQRAF (40,1)]{\includegraphics[width=1.2in]{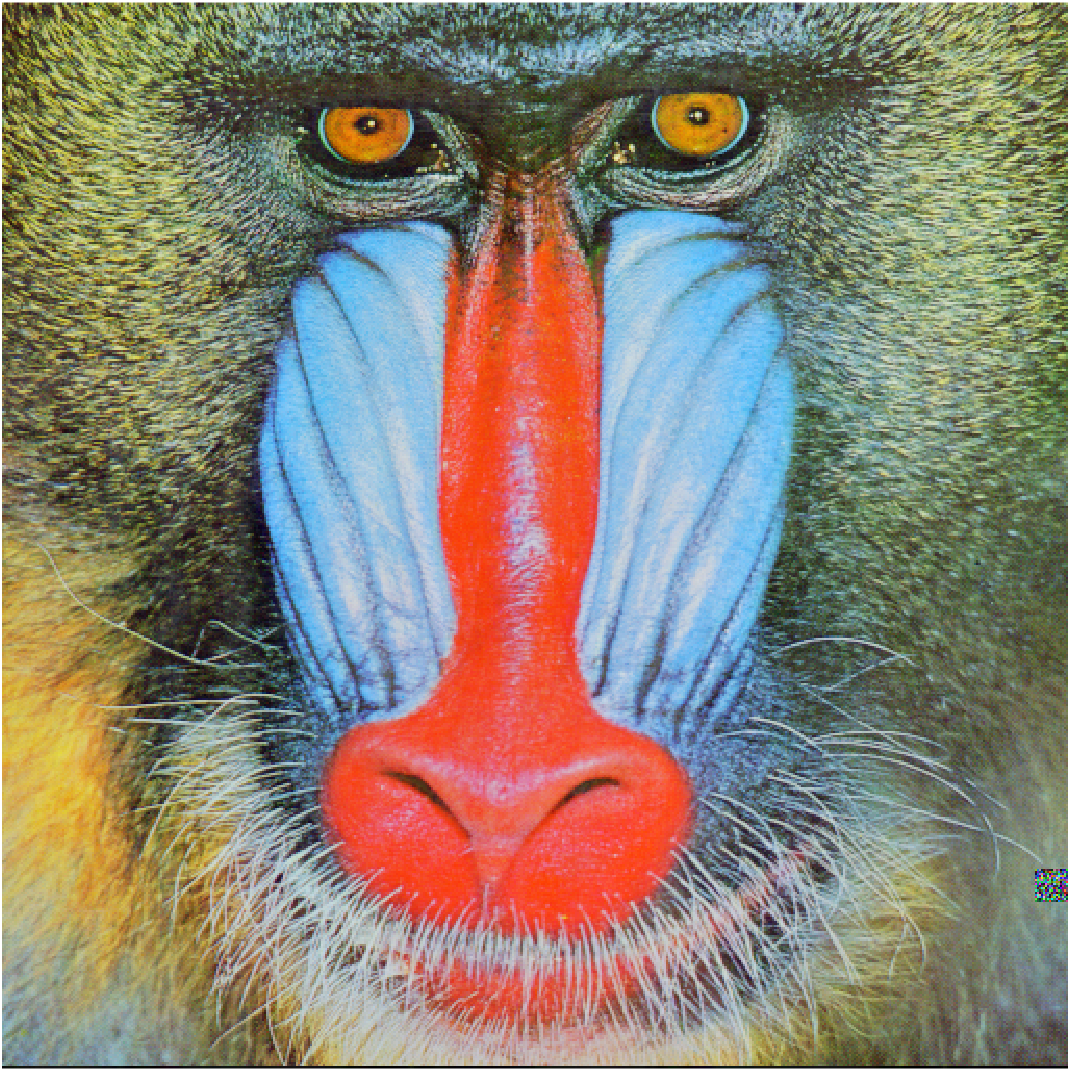}%
		\label{PQRAF_re2}}\,
	\subfloat[PQARAF (301,1)]{\includegraphics[width=1.2in]{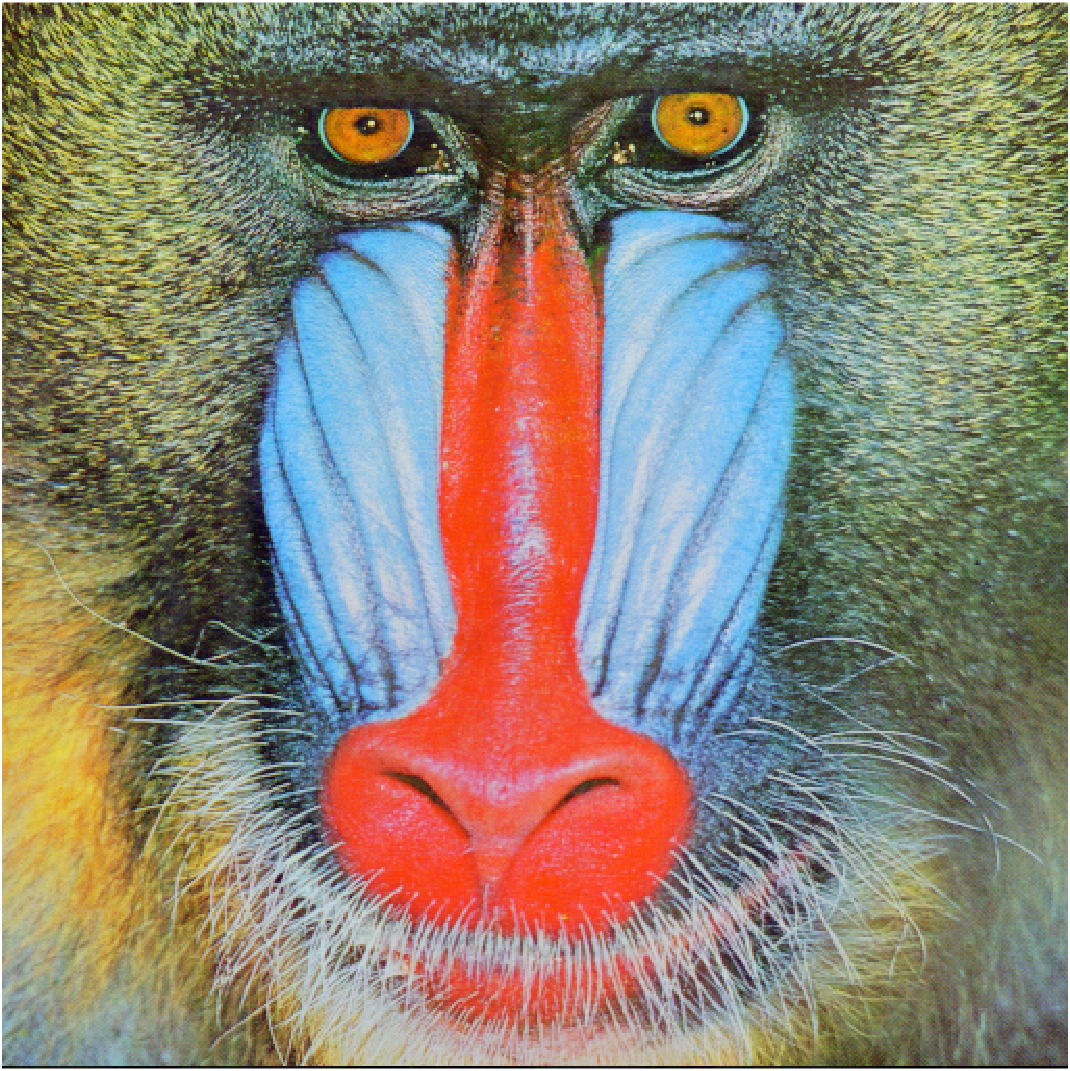}%
		\label{PQARAF_re}}
	
	\subfloat[PQAdRAF (133,1)]{\includegraphics[width=1.2in]{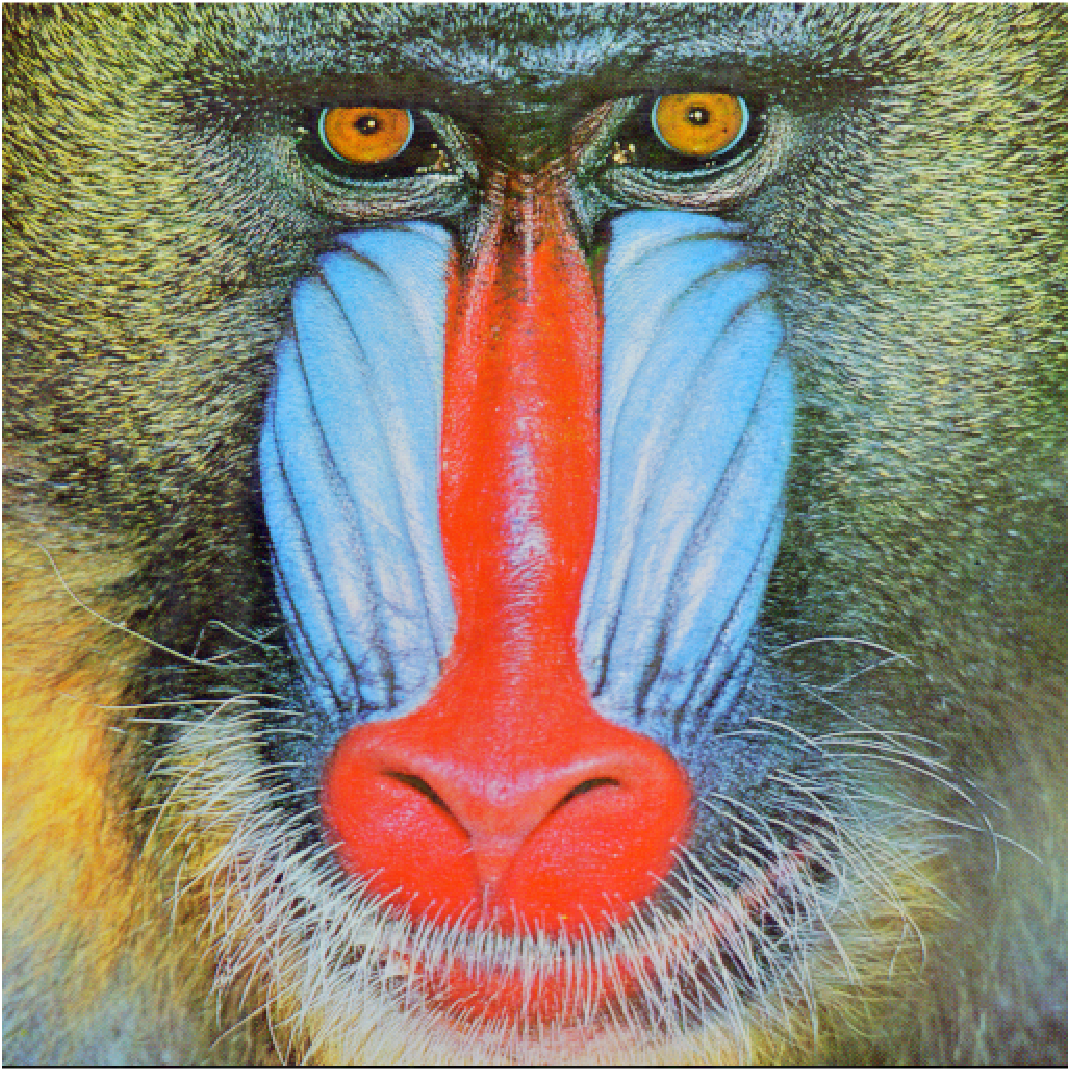}%
		\label{PQAdRAF_re}}\,
	\subfloat[PQIRAF (38,1)]{\includegraphics[width=1.2in]{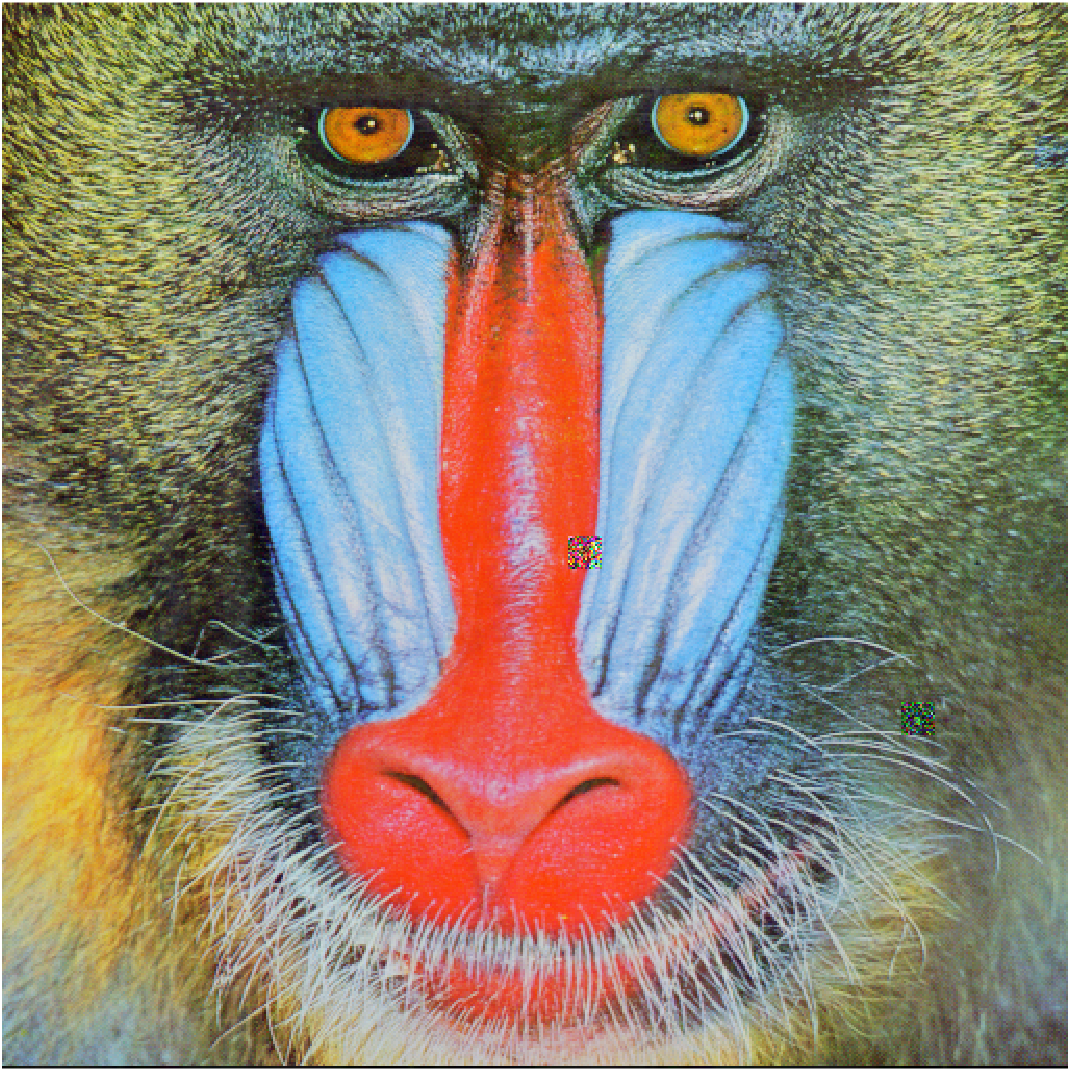}%
		\label{PQIRAF_re}}	\,
	\subfloat[PQWF (17,0.8)]{\includegraphics[width=1.2in]{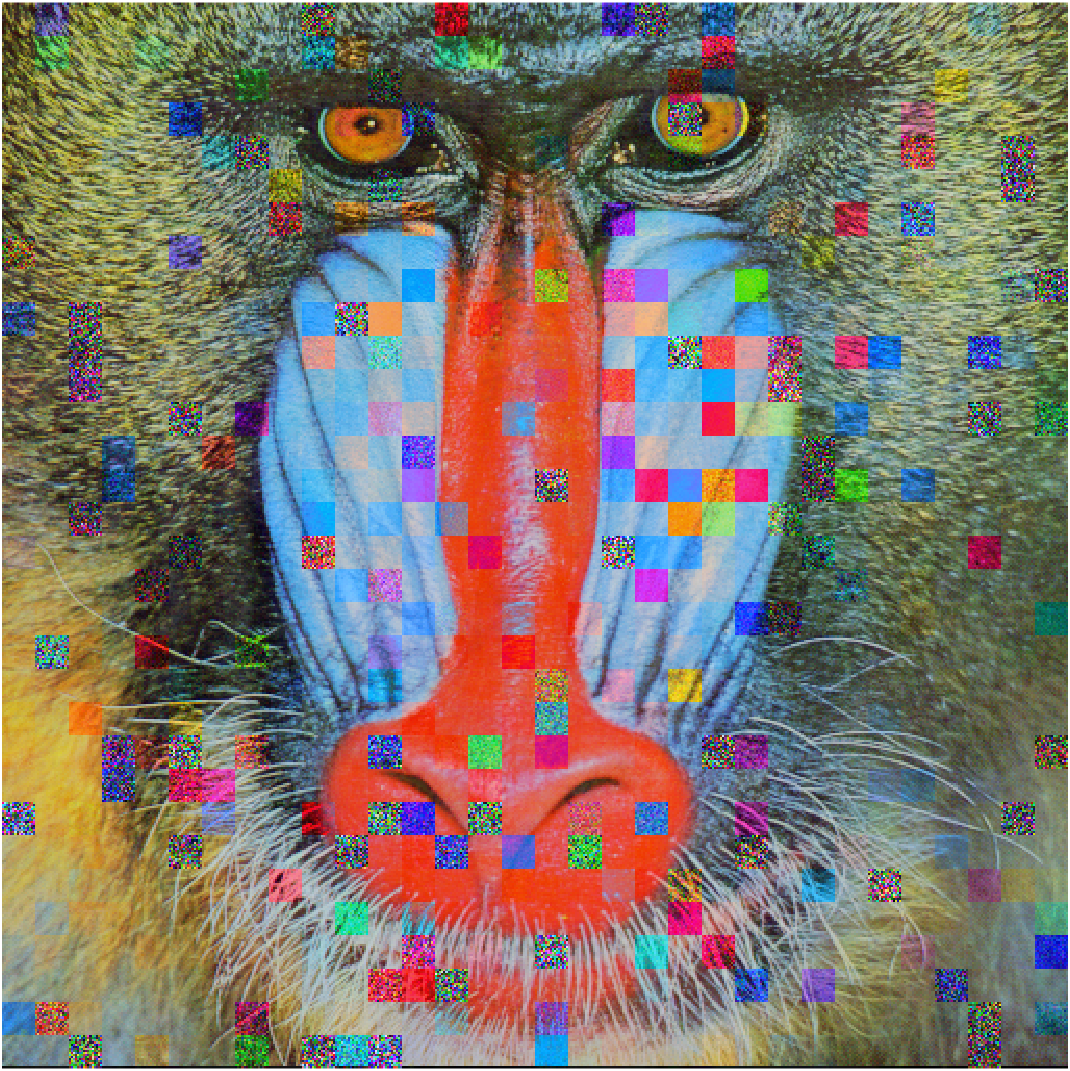}%
		\label{PQWF_re2}}
	\caption{Comparison of the variations of QRAF}
	\label{PQRAFs}
\end{figure*}

Compared to the previous experiment, QRAF and its variants all exhibit exceptional performance, successfully reconstructing the signals without any defective blocks. While visual inspection may not easily reveal differences among these results, the quantitative indices provide a more reliable assessment. In this experiment, SSIM loses its discriminatory power, as all algorithms achieve a score of $1$, reflecting their high accuracy. However, PQARAF stands out with a significantly higher PSNR of $301$, surpassing other variants. Meanwhile, the performance of PQRAF and PQIRAF in terms of PSNR remains very close, showing the same ability of reconstructing signals in practical.

\subsection{Comparison of conventional and quaternionic algorithms} \label{crq1}
It is {\em natural} and {\em frequently} to question the advantage of the Phase Retrieval in the quaternionic approach. While the problem can indeed be addressed in both the real and quaternionic domains, the quaternionic approach offers more than just an alternative method; it also provides certain practical benefits that enhance the research field. To illustrate this, we compare the Quaternionic Reweighted Amplitude Flow (QRAF) to the real Reweighted Amplitude Flow (RAF) algorithms.

Similar to \cite{chen_phase_2023}, we slightly modify the structure of the given quaternion signal to fit the real RAF algorithms. Here we propose two forms of real RAF algorithms, i.e., \textbf{RAF of a monochromatic model} (denoted by RAF-Mono) and \textbf{RAF of a concatenation model} (denoted by RAF-Conc).

For a pure quaternion signal $\bp \in \mathbb{H}^{d}_{p}$, the RAF of a monochromatic model separately reconstruct $\mathcal{P}^{i}(\bp)$, $\mathcal{P}^{j}(\bp)$ and $\mathcal{P}^{k}(\bp)$, which are three real signal in $\mR^{d}$. We denote by $\hat{\bz}^{i}$, $\hat{\bz}^{j}$ and $\hat{\bz}^{k}$ the reconstructed signal corresponding to each channel. To keep the metric consistent, the reconstruction error for real RAF of a quaternion signal $\bp$ is measured as $\left(\sum_{h=i,j,k}{\rm dist}_{p}\left(\mathcal{P}^{h}(\bp), \hat{\bz}^{h} \right)^{2} \right)^{1/2} $. On the other hand, the concatenation model of real RAF for a pure quaternion signal $\bp \in \mathbb{H}^{d}_{p}$ also split the signal as $\mathcal{P}^{i}(\bp)$, $\mathcal{P}^{j}(\bp)$ and $\mathcal{P}^{k}(\bp)$, and concatenate the three real vector into a new signal, i.e. $\hat{\bp} = \left[\left(\mathcal{P}^{i}(\bp)\right)^{T}, \left(\mathcal{P}^{i}(\bp)\right)^{T}, \left(\mathcal{P}^{i}(\bp)\right)^{T}\right]^{T} \in \mR^{3d}$. We denote by $ \hat{\bz} \in \mR^{3d}$ the reconstructed signal, then the measurement for reconstruction error of this algorithm is defined as ${\rm dist}_{p}\left(\hat{\bp}, \hat{\bz} \right)$.

{\bf i. Compete test in convergence and success rate}\, Firstly, we investigate the performance of algorithms in convergence and success rate as before. To compare with RAF-Mono and RAF-Conc, we test pure quaternionic algorithms, since the given signal here is pure quaternion $\bp \in \mathbb{H}^{d}_{p}$.

\begin{figure}[htbp]
	\centering
	\begin{minipage}{0.45\textwidth}
		\centering
		\includegraphics[width=1.0\textwidth]{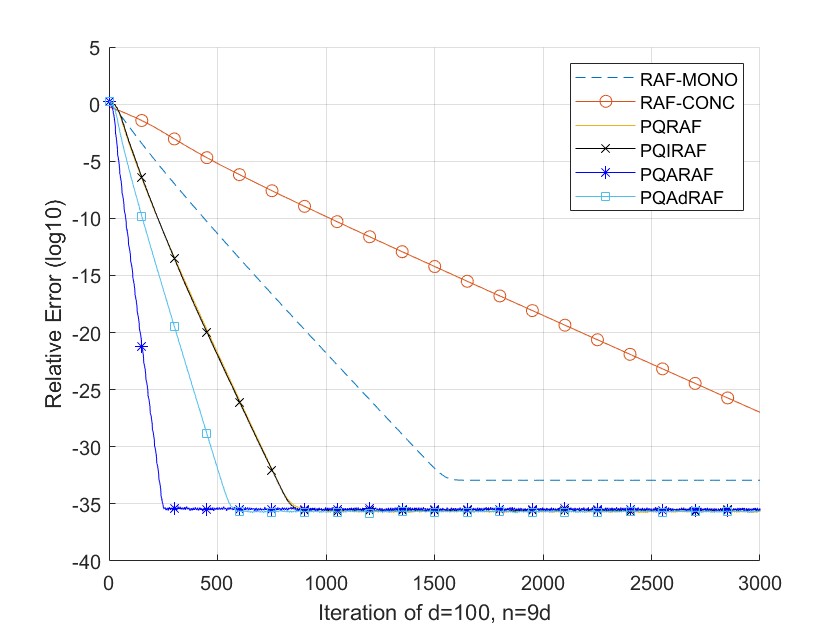}
		\caption{Convergence of real and pure quaternion algorithms}
		\label{Pic_Conv_Q_R}
	\end{minipage}
	\hfill
	\begin{minipage}{0.45\textwidth}
		\centering
		\includegraphics[width=1.0\textwidth]{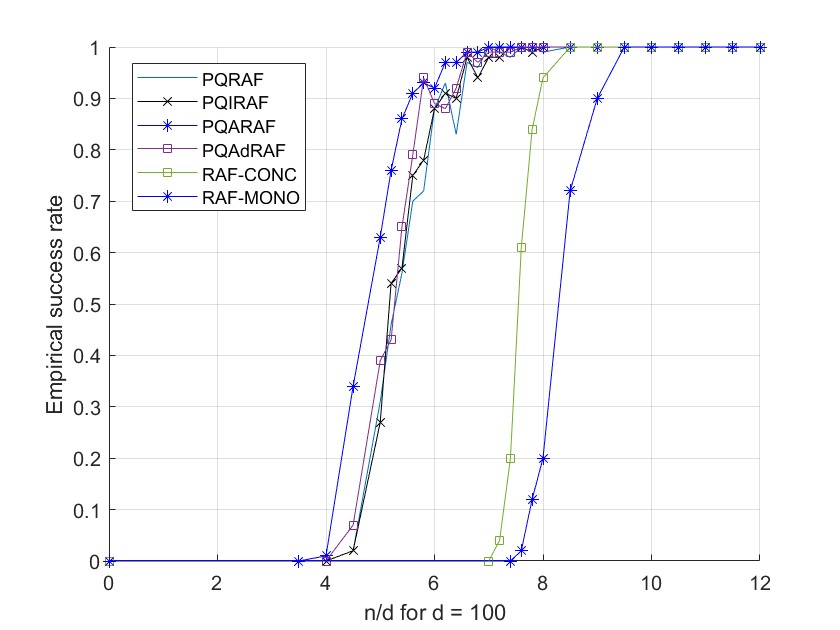}
		\caption{Success rate of real and pure quaternion algorithms}
		\label{Pic_SR_Q_R}
	\end{minipage}
\end{figure}

It can be easily seen from Fig. \ref{Pic_Conv_Q_R} that quaternionic algorithms converge faster than real algorithms, which PQARAF reach the relative error of $\ln(10^{-35})$ within $500$ iterations. From Fig. \ref{Pic_SR_Q_R}, we can observe that PQRAF and its variants have much lower $n/d$ rate to achieve the success rate of $1$ in this experiment. 

\begin{table*}[htbp]
	\caption{Comparison of iteration count and time cost among algorithms}%标题
	\centering%把表居中
  \resizebox{\textwidth}{!}{
	\begin{tabular}{|l|c|c|c|c|c|c|c|}
		\hline
		Algorithm: & $d$ 					& PQRAF 		& PQIRAF 		&PQARAF 		& PQAdRAF 	& RAF-MONO 	& RAF-CONC \\
		\hline
		Success rate: &\multirow{3}*{64} 	&\textbf{1}		&\textbf{1} 	&\textbf{1}		& \textbf{1}&\textbf{1}	&\textbf{1}\\
		
		Iterations:&  						&281.12			&278.74			&\textbf{84.95}	&192.08 	& 487.78 	& 1149.13 \\
		
		Time (s): & 						&20.65			&8.12 			&6.37 	&13.58 	&\textbf{2.69}&3.21 \\
		\hline
		Success rate: &\multirow{3}*{100} 	&\textbf{1} 	& \textbf{1}	&\textbf{1} 	&\textbf{1} &\textbf{1} &\textbf{1}\\
		
		Iterations:&  						&299.54			&288.64			&\textbf{93.52}	&211.16		&484.43 &1160.02 \\
		
		Time (s): & 						&52.89			&15.22			&16.47			&37.30		&\textbf{4.39}	& 4.96 \\
		\hline
	\end{tabular}}
	
	\label{Table_RAF_realVSqall}
\end{table*}

However, one advantage of real algorithms that quaternionic ones cannot have is the time consumption. From Table \ref{Table_RAF_realVSqall}, we can easily observe that, though PQRAF and its variants have much less average convergence steps, they cannot compete with RAF-Mono and RAF-Conc in time consumption. 

The only reason of this situation is that we use MATLAB to run the experiments, which is a professional software to deal with matrix computation especially in real and complex field, thus for RAF-Mono and RAF-Conc, it is suitable for these algorithm. There are no such a special software for quaternion algorithms at present. Much of the time consumption is used on quaternion-to-real and quaternion-to-complex matrix transformation to complete the computation. 

{\bf ii. Comparison of quaternionic and real algorithms in color image}\, In this section, we compare real PR algorithms and the quaternionic ones in reconstructing color images. The image to be tested is 4.1.05 from the data set. The size of the image is $256 \times 256$. Similar to the first experiment in Fig. \ref{QWFsvsQAFs}, we split the image into $32 \times 32$ piece with each piece is of $8 \times 8$ pixels. The total iteration is still $300$ and we set $n/d$ rate to $9$ as always. The results is shown in Fig. \ref{QRAFvsRAF}.

\begin{figure*}[!t]
	\centering
	\subfloat[PQRAF (87,1)]{\includegraphics[width=1.2in]{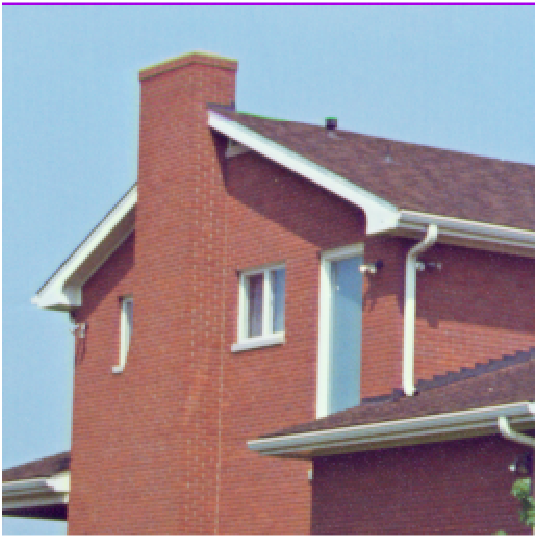}%
		\label{pic_trial_3_PQRAF}}\,
	\subfloat[PQARAF (282,1)]{\includegraphics[width=1.2in]{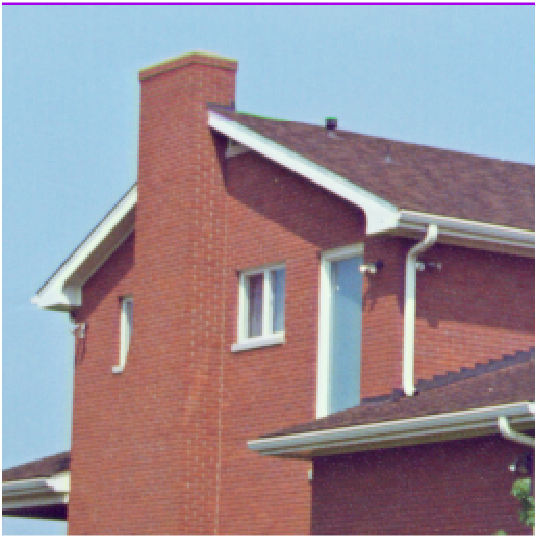}%
		\label{pic_trial_3_PQARAF}}\,
	\subfloat[PQAdRAF (125,1)]{\includegraphics[width=1.2in]{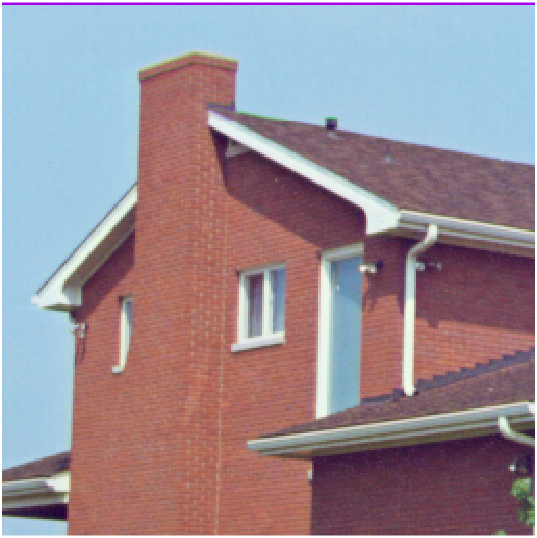}%
		\label{pic_trial_3_PQAdRAF}}
	
	\subfloat[PQIRAF (64,1)]{\includegraphics[width=1.2in]{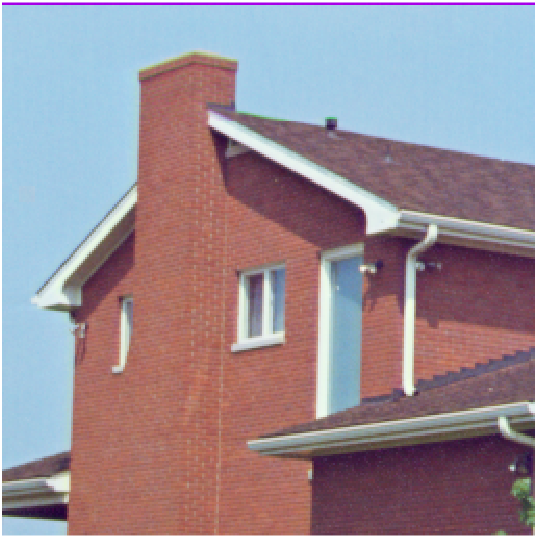}%
		\label{pic_trial_3_PQIRAFF}}\,
	\subfloat[RAF-MONO (27,0.95)]{\includegraphics[width=1.2in]{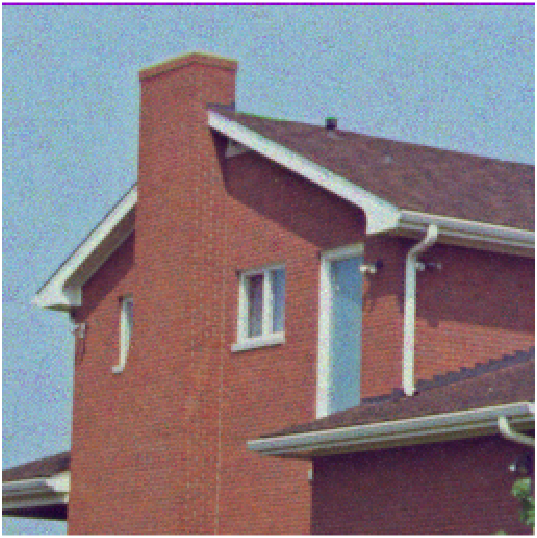}%
		\label{pic_trial_3_MONO}}\,
	\subfloat[RAF-CONC (28,0.97)]{\includegraphics[width=1.2in]{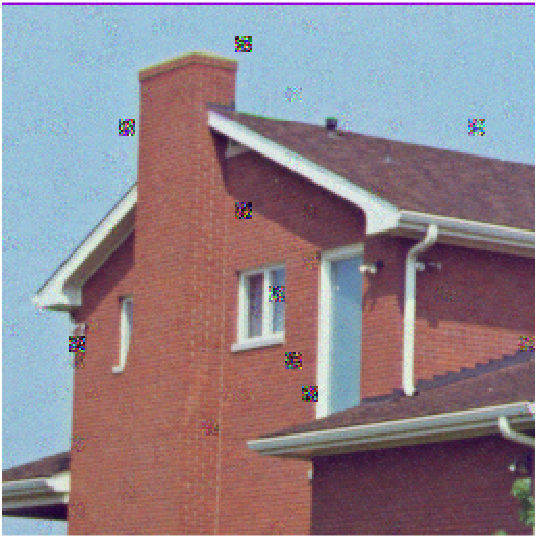}%
		\label{pic_trial_3_CONC}}
	\caption{Comparison of quaternionic and real algorithms}
	\label{QRAFvsRAF}
\end{figure*}

From the result, it is evident that PQRAF and its variants achieve superior results with the same number of iterations compared to real RAF algorithms. After the same number of iterations, RAF-Conc still displays some defective blocks, and RAF-Mono appears generally darker than the original image. In contrast, all quaternionic algorithms produce well-reconstructed images without any noticeable defects.

These differences can be quantitatively assessed using indices similar to those employed in the experiment depicted in Fig.\,\ref{QWFsvsQAFs}. Specifically, PQARAF significantly outperforms other algorithms, achieving a PSNR of $282$ and an SSIM of $1$. In comparison, the real algorithms RAF-Mono and RAF-Conc attain PSNR values of $27$ and $28$, and SSIM scores of $0.95$ and $0.97$, respectively. The relatively lower PSNR and non-optimal SSIM values for the real algorithms indicate their inferior performance relative to the quaternionic algorithms. 

However, one thing that cannot be neglected is that real RAF algorithms are evidently faster than quaternionic ones in reconstructing an image as we already mentioned. By simply increasing the $n/d$ rate without any time consumption piles up, real algorithms can surpass the quaternionic ones in convergence as well. However, such comparisons would no longer adhere to the same standard, and thus fall outside the scope of this study.

\section{{Conclusions}}\label{p8}

In this work, we addressed the quaternionic phase retrieval (QPR) problem by proposing the Quaternionic Reweighted Amplitude Flow (QRAF) algorithm, a novel generalization of the real/complex amplitude flow framework to the quaternion domain. Our method innovatively extends the reweighted gradient descent paradigm by carefully accounting for the noncommutative nature of quaternion multiplication. In addition, we developed three algorithmic variants of QRAF to further enhance convergence and stability. Comprehensive experiments on both synthetic datasets and real-world color images demonstrate that QRAF consistently outperforms existing quaternionic methods such as QWF, QTWF, and QTAF in terms of convergence speed, recovery accuracy, and robustness. In particular, our methods achieve higher success rates under limited measurements, and demonstrate superior PSNR and SSIM scores in practical image reconstruction tasks.

To further improve theoretical rigor, we introduced the Quaternionic Perturbed Amplitude Flow (QPAF) algorithm, which is based on a perturbed amplitude model and inherits provable linear convergence properties from its complex counterpart. While QPAF achieves slightly lower empirical performance than QRAF, its simplicity and theoretical guarantees make it a valuable complementary approach.

%Overall, this study contributes a unified and extensible framework for amplitude-based quaternionic phase retrieval, offering new algorithmic tools and insights into the design of nonconvex optimization methods in hypercomplex settings.

\section{{Future Work}} \label{p10}

While the proposed methods achieve strong empirical results, several  questions remain  open for future investigation.

\textbf{(1) Theoretical analysis of QRAF:} One notable challenge is the absence of a proof for the Local Convergence Radius (LCR) condition in the QRAF, as outlined in Problem \ref{prop1}. This limitation is not unique to the quaternionic setting but is also observed in the complex setting. We anticipate that future work establishing the LCR condition for complex signals will facilitate its extension to the quaternionic framework without significant complexity. 

\textbf{(2) Efficient quaternionic computation:} The computational cost of QPR algorithms remains high compared to their real-valued analogues. This is largely due to the lack of hardware-accelerated support and optimization frameworks for quaternion algebra. Future work may explore dedicated quaternionic linear algebra libraries, GPU-accelerated quaternion solvers, or fast approximation schemes that retain algebraic structure while reducing computational complexity.

\textbf{(3) Parameter tuning strategies:} The performance of QRAF and QPAF is sensitive to initialization and step-size parameters, which are less well understood in quaternionic optimization. Developing adaptive parameter selection schemes—such as learning-based hyperparameter tuning or curvature-aware step size adjustment—could improve robustness across diverse signal types. It is also an interesting topic.

%\textbf{(4) Toward solving Problem~\ref{prop1}:} As introduced in Section~\ref{p3}, Problem~\ref{prop1} remains an open theoretical question central to the convergence analysis of QRAF. One potential direction is to investigate quaternionic analogs of existing geometric tools—such as the Polyak-Łojasiewicz condition or Kurdyka-Łojasiewicz inequality—which may offer alternative routes to establishing sufficient descent properties. Another approach is to explore quaternion-valued loss landscapes via Riemannian optimization techniques that exploit the manifold structure of unit quaternions.

%\textbf{(5) Integration with real-world applications:} Finally, the proposed algorithms have great potential for integration into practical systems involving color image recovery, holography, and radar signal processing. Future studies may explore hybrid models that combine QRAF with data-driven priors, such as quaternion-valued neural networks or dictionary-based sparse representations.

%We hope that the algorithmic and theoretical insights presented in this work will inspire further advances in hypercomplex optimization, and promote the practical deployment of quaternionic methods in real-world inverse problems.

\section*{Acknowledgments}
{We thank the reviewers for their insightful comments, which  significantly improved the clarity of the paper.} This work  was partially supported  by NSFC Grant No.12101451.
%% The Appendices part is started with the command \appendix;
%% appendix sections are then done as normal sections
%\appendix
%\section{Example Appendix Section}
%\label{app1}

%Appendix text.

%% For citations use: 
%%       \citet{<label>} ==> Lamport [21]
%%       \citep{<label>} ==> [21]
%%
%Example citation, See \citet{lamport94}.

%% If you have bib database file and want bibtex to generate the
%% bibitems, please use
%%
%%  \bibliographystyle{elsarticle-num-names} 
%%  \bibliography{<your bibdatabase>}

%% else use the following coding to input the bibitems directly in the
%% TeX file.

%% Refer following link for more details about bibliography and citations.
%% https://en.wikibooks.org/wiki/LaTeX/Bibliography_Management

\section{Appendix}
To provide additional insight into the behavior of  QRAF-based algorithms, we present supplementary experiments focusing on key implementation parameters. In particular, we investigate the influence of the batch size in QIRAF, a mini-batch variant of QRAF designed for improved computational efficiency.

\subsection{Effect of Batch Size in QIRAF}

QIRAF is designed to accelerate the convergence of QRAF by using mini-batch gradients, thereby reducing computational cost while maintaining comparable recovery performance. A crucial factor in QIRAF is the choice of batch size $\ABS{\Gamma}$. Intuitively, a smaller batch size reduces computation per iteration but may lead to less accurate updates and degraded convergence. Conversely, a larger batch size approaches full gradient descent, improving stability but sacrificing speed. Thus, selecting an appropriate batch size involves a trade-off between efficiency and accuracy.

To evaluate this trade-off, we fix the signal dimension $d = 100$ and oversampling ratio $n/d = 10$, and examine four batch sizes: $\ABS{\Gamma} = 0.1n$, $0.15n$, $0.2n$, and $0.25n$. These are compared with the default setting used in our main experiments, where $\ABS{\Gamma} = 2^k$. Fig. \ref{Pic_bs_cv_all} illustrates the convergence behavior across different batch sizes, with a zoomed-in view shown in Fig. \ref{Pic_bs_cv_local}. The results show that all batch sizes achieve similar overall convergence trends, but subtle differences emerge upon closer inspection. Specifically, larger batches yield smoother and slightly faster convergence, as expected.
\begin{figure}[htbp]
	\centering
	\begin{minipage}{0.45\textwidth}
		\centering
		\includegraphics[width=1.0\textwidth]{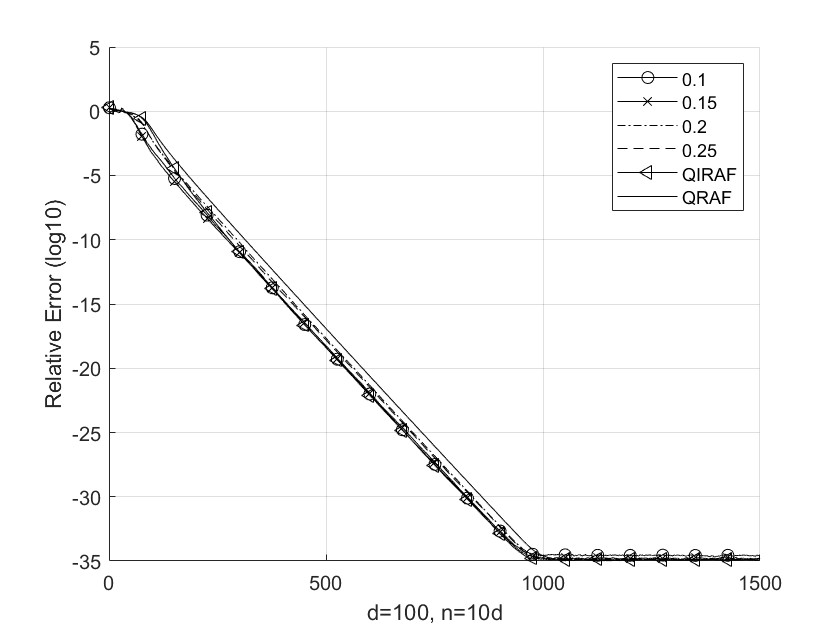}
		\caption{Convergence of QIRAF with different batch sizes}
		\label{Pic_bs_cv_all}
	\end{minipage}
	\hfill
	\begin{minipage}{0.45\textwidth}
		\centering
		\includegraphics[width=1.0\textwidth]{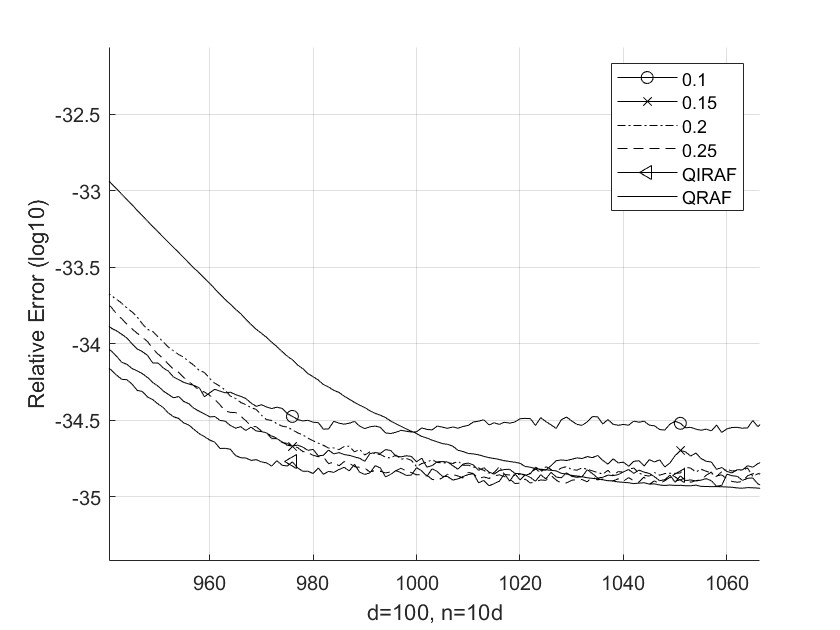}
		\caption{Zoomed-in view of convergence (from Fig.~\ref{Pic_bs_cv_all})}
		\label{Pic_bs_cv_local}
	\end{minipage}
\end{figure}

To further assess the practical impact of batch size, we conduct $100$ trials for each setting and evaluate two metrics: success rate and average runtime. The results are summarized in Fig. \ref{Pic_bs_sr} and \ref{Pic_bs_tc}. As shown in Fig.~\ref{Pic_bs_sr}, the success rate of QIRAF improves as the batch size increases, approaching that of QRAF when $\ABS{\Gamma}$ becomes large. This is consistent with the fact that QIRAF reduces to QRAF when the batch size equals the full sample size ($\ABS{\Gamma} = n$). However, Fig.~\ref{Pic_bs_tc} reveals that this gain in accuracy comes at the cost of increased computational time.
\begin{figure}[htbp]
	\centering
	\begin{minipage}{0.45\textwidth}
		\centering
		\includegraphics[width=1.0\textwidth]{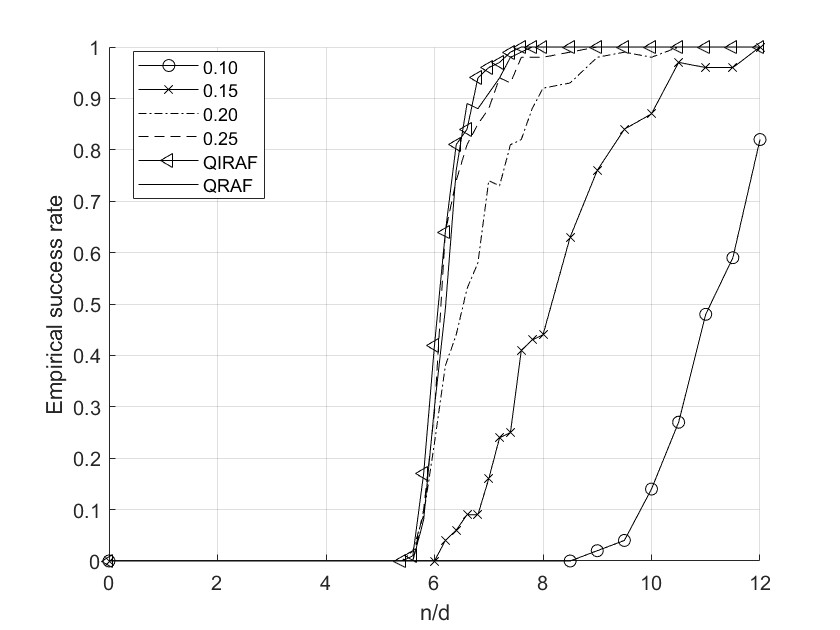}
		\caption{Success rate vs. batch size}
		\label{Pic_bs_sr}
	\end{minipage}
	\hfill
	\begin{minipage}{0.45\textwidth}
		\centering
		\includegraphics[width=1.0\textwidth]{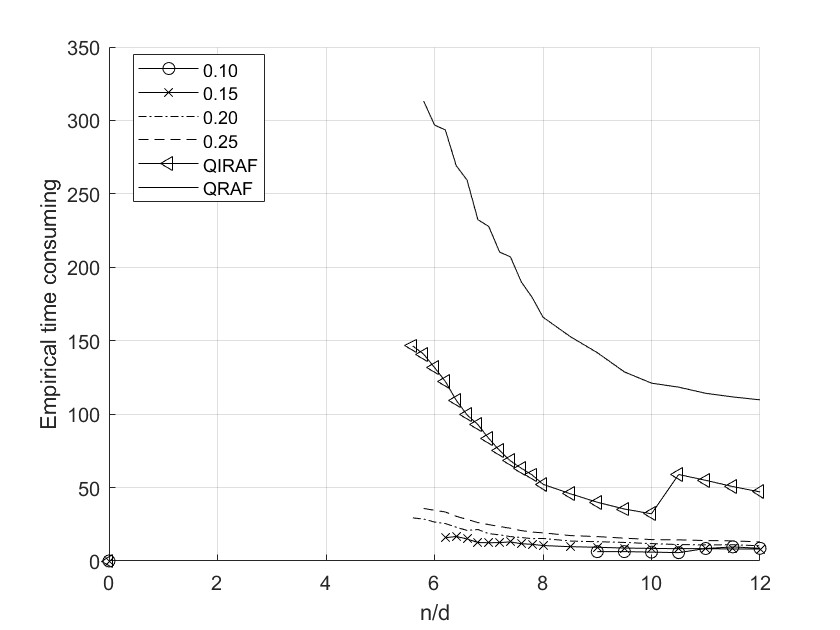}
		\caption{Average time consumption vs. batch size}
		\label{Pic_bs_tc}
	\end{minipage}
\end{figure}

In summary, the batch size $\ABS{\Gamma}$ plays a critical role in balancing efficiency and accuracy in QIRAF. The proposed setting of $\ABS{\Gamma} = 2^k$ offers a practical compromise, achieving competitive success rates with significantly reduced runtime. These findings support the robustness and practicality of QIRAF under different batch configurations.

\subsection{Phase comparison of QRAFs}
To gain further insight into the global convergence behavior of QRAF-based algorithms, we conduct additional experiments of algorithms on images. We evaluated the PSNR and SSIM of each algorithm over 1500 iterations. The results are presented in Fig. \ref{Pic_conv_psnr} and \ref{Pic_conv_ssim}.

\begin{figure}[!htbp]
	\centering
	\begin{minipage}{0.45\textwidth}
		\centering
		\includegraphics[width=1.0\textwidth]{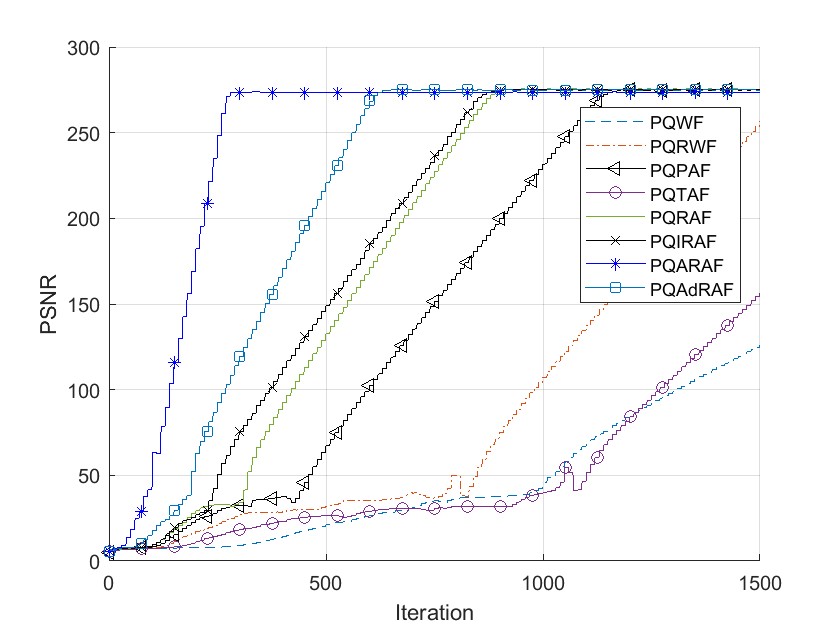}
		\caption{Convergence of PSNR}
		\label{Pic_conv_psnr}
	\end{minipage}
	\hfill
	\begin{minipage}{0.45\textwidth}
		\centering
		\includegraphics[width=1.0\textwidth]{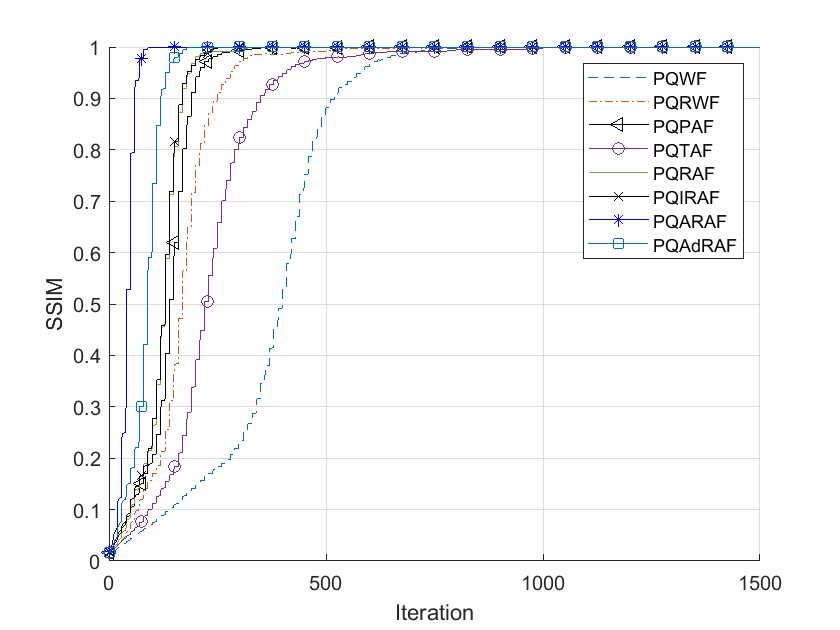}
		\caption{Convergence of SSIM}
		\label{Pic_conv_ssim}
	\end{minipage}
\end{figure}

As shown in Figure \ref{Pic_conv_psnr}, PSNR gradually increases and converges after a certain number of iterations, which is consistent with the convergence patterns observed in our synthetic experiments. This agreement further validates the reliability of our proposed methods.

Figure \ref{Pic_conv_ssim} illustrates SSIM values over the same iterations. Due to its bounded range in $[0,1]$, SSIM reveals more clearly the early-stage convergence of the algorithms. Most methods reach near-perfect SSIM relatively quickly, demonstrating that SSIM is less sensitive in distinguishing final performance at high fidelity levels. Nevertheless, the results support the superior performance of our algorithms.

To provide intuitive insights into the visual quality evolution, we present selected reconstructed images at key iterations ($T = 0$, $200$, $300$, $800$, and $1500$) in Figure \ref{pic_gen_all}. These samples reveal rapid improvement in image clarity and detail, with differences becoming visually negligible after $T = 300$. 

\begin{figure}[htbp]
	\centering
	\captionsetup[subfloat]{labelsep=none,format=plain,labelformat=empty}
	
	\subfloat[PQRAF:T=0]{\includegraphics[width=0.7in]{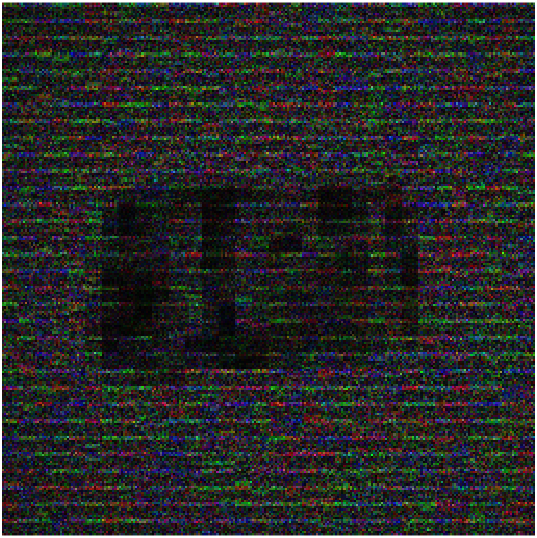}%
		}\,
	\subfloat[T=200(29,0.98)]{\includegraphics[width=0.7in]{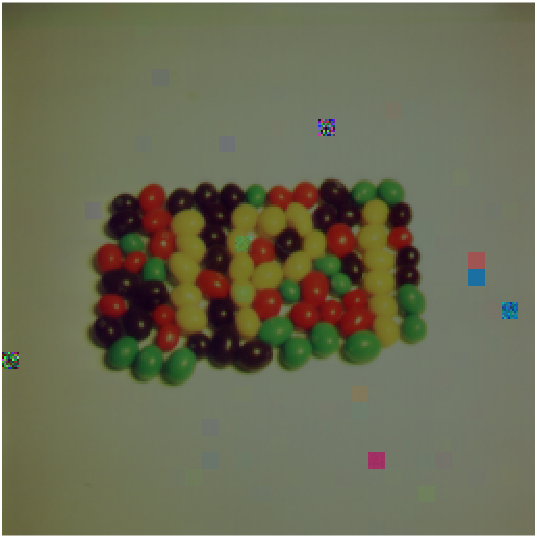}%
		}\,
	\subfloat[T=300(34,0.99)]{\includegraphics[width=0.7in]{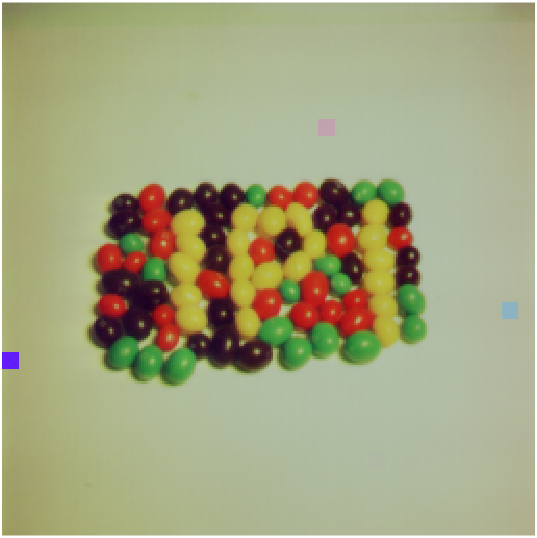}%
		}\,
	\subfloat[T=800(246,1)]{\includegraphics[width=0.7in]{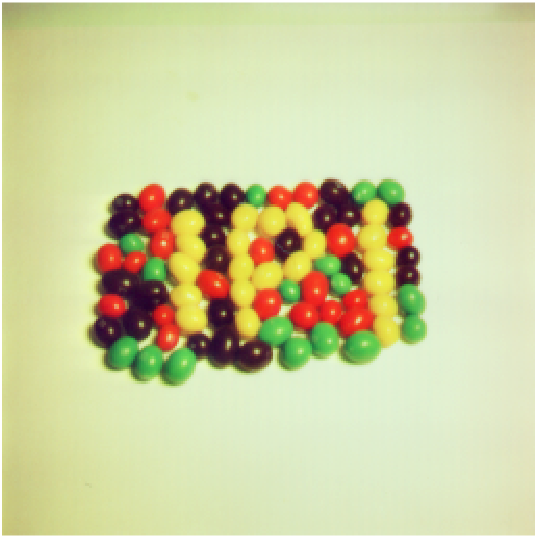}%
		}\,
	\subfloat[T=1500(275,1)]{\includegraphics[width=0.7in]{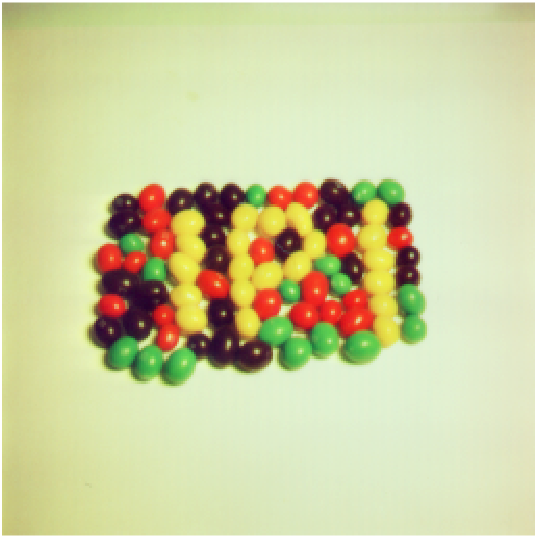}%
		}

    \subfloat[PQIRAF:T=0]{\includegraphics[width=0.7in]{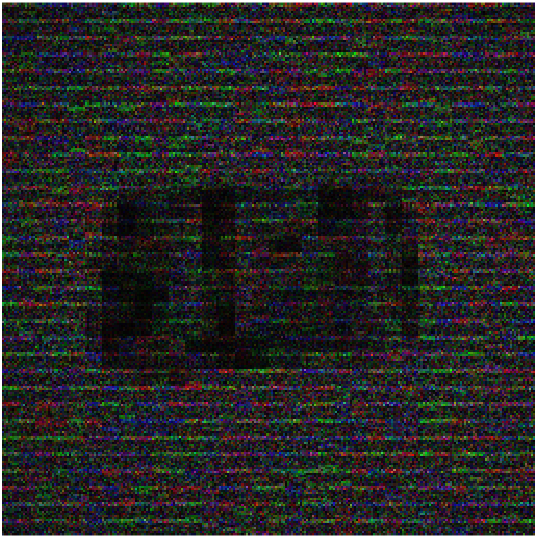}%
		}\,
	\subfloat[T=200(27,0.98)]{\includegraphics[width=0.7in]{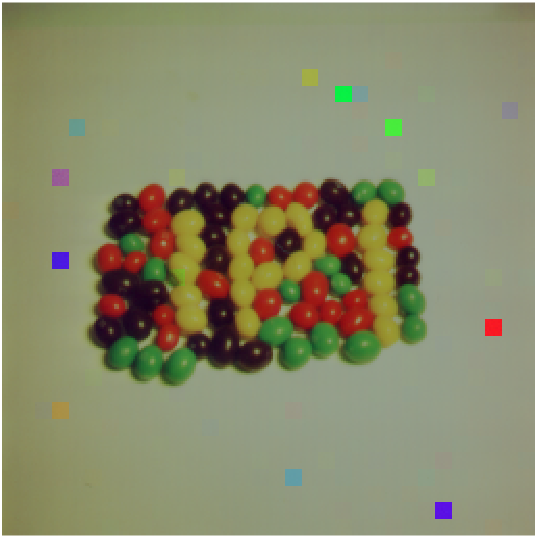}%
		}\,
	\subfloat[T=300(76,1)]{\includegraphics[width=0.7in]{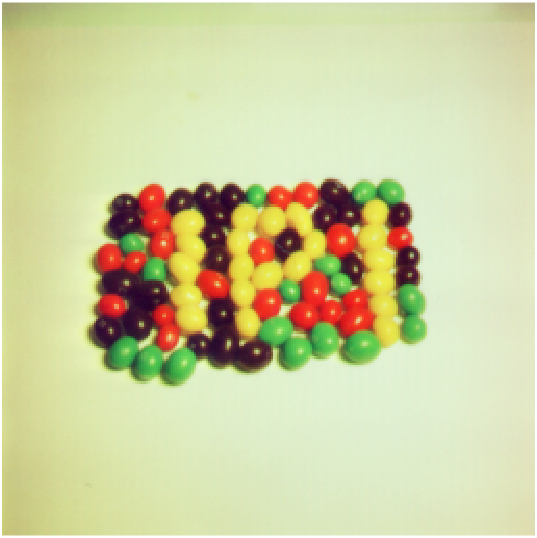}%
		}\,
	\subfloat[T=800(254,1)]{\includegraphics[width=0.7in]{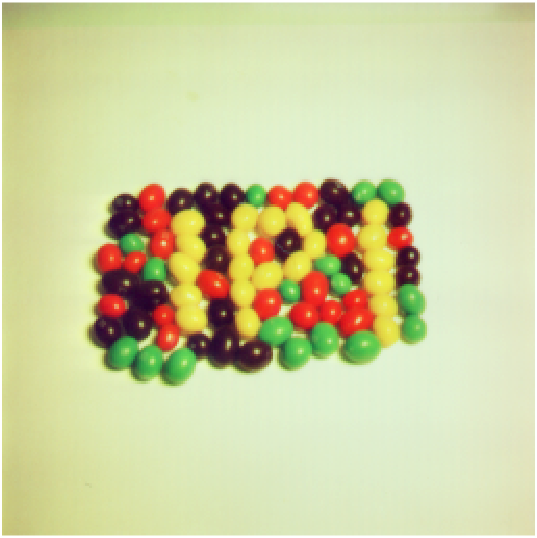}%
		}\,
	\subfloat[T=1500(275,1)]{\includegraphics[width=0.7in]{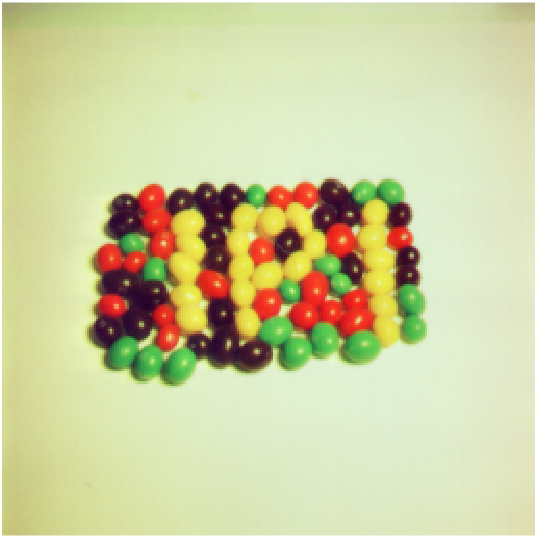}%
		}

    \subfloat[PQARAF:T=0]{\includegraphics[width=0.7in]{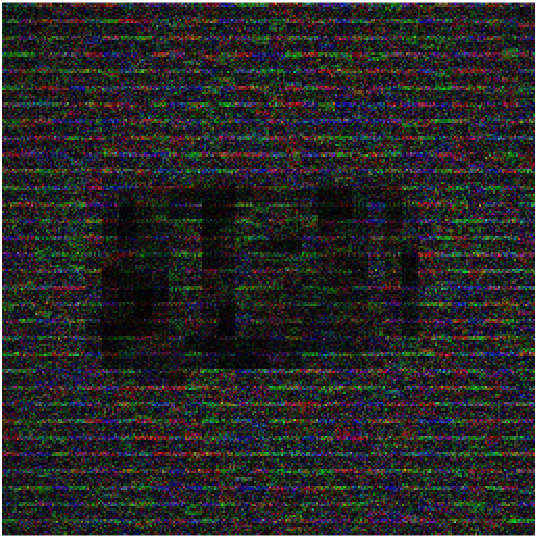}%
		}\,
	\subfloat[T=200(182,1)]{\includegraphics[width=0.7in]{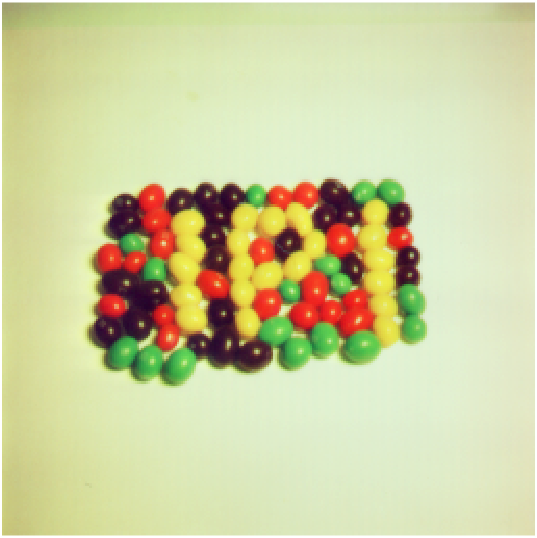}%
		}\,
	\subfloat[T=300(274,1)]{\includegraphics[width=0.7in]{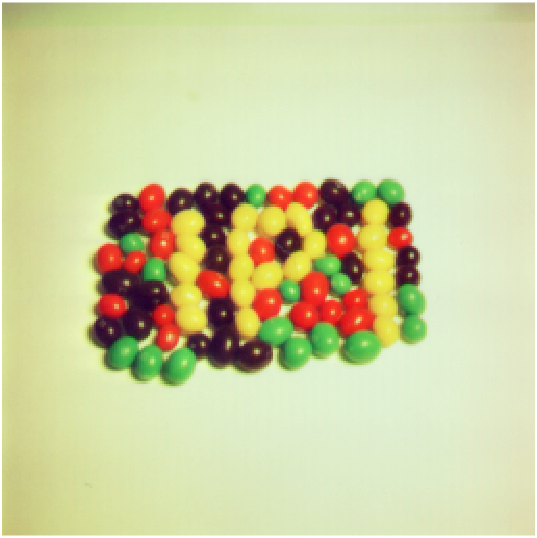}%
		}\,
	\subfloat[T=800(273,1)]{\includegraphics[width=0.7in]{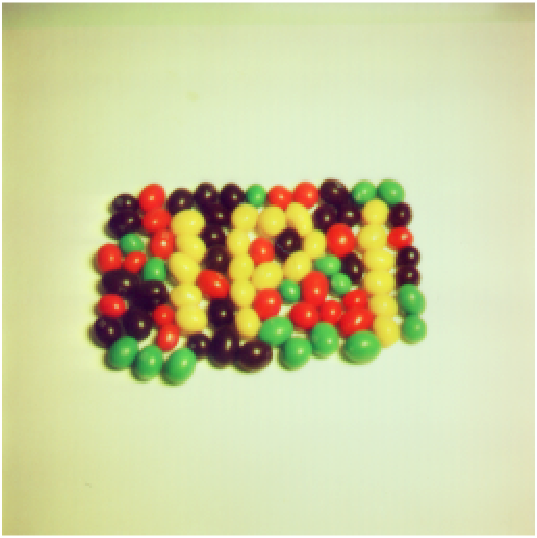}%
		}\,
	\subfloat[T=1500(274,1)]{\includegraphics[width=0.7in]{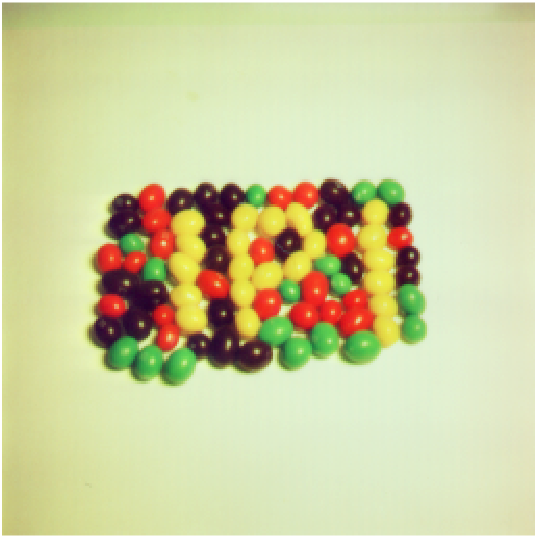}%
		}

    \subfloat[PQAdRAF:T=0]{\includegraphics[width=0.7in]{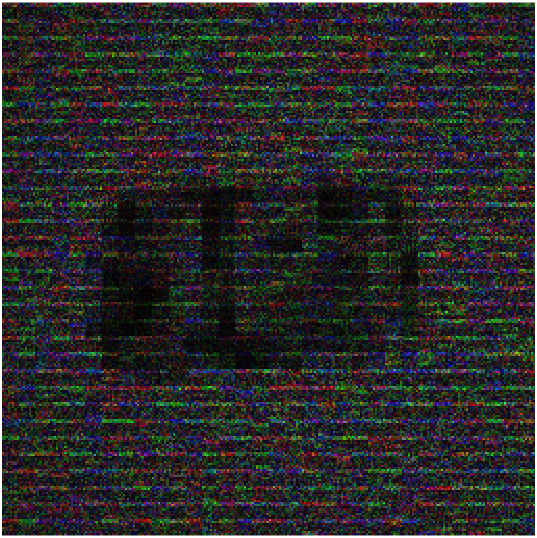}%
		}\,
	\subfloat[T=200(63,1)]{\includegraphics[width=0.7in]{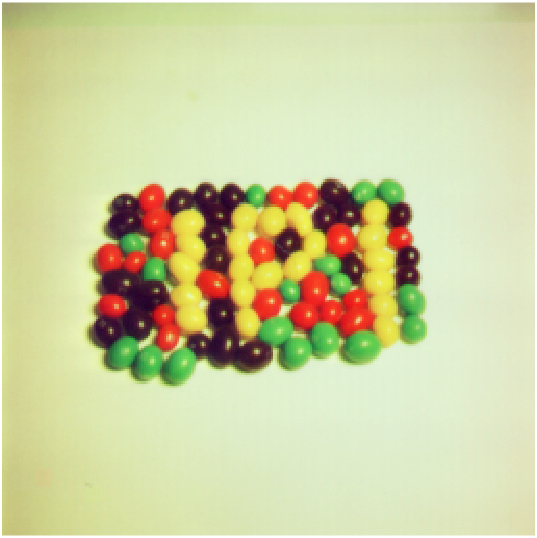}%
		}\,
	\subfloat[T=300(119,1)]{\includegraphics[width=0.7in]{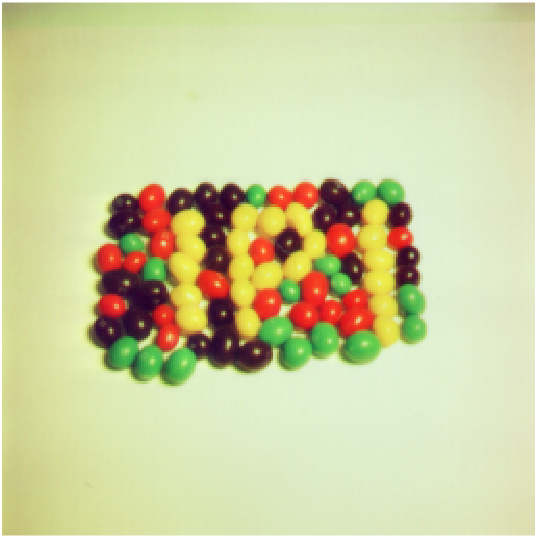}%
		}\,
	\subfloat[T=800(275,1)]{\includegraphics[width=0.7in]{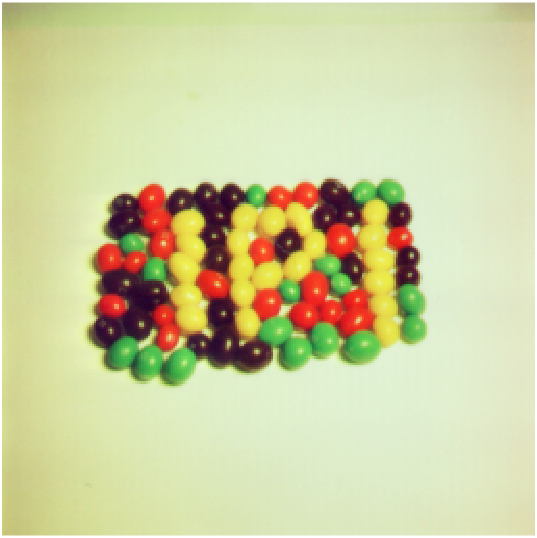}%
		}\,
	\subfloat[T=1500(275,1)]{\includegraphics[width=0.7in]{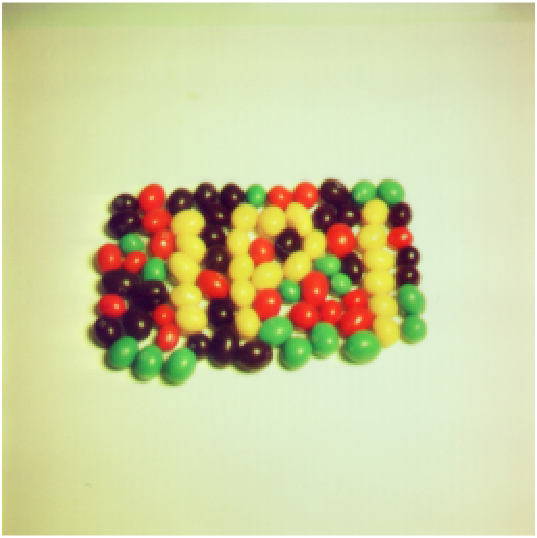}%
		}
	\caption{Key iteration steps of color image}
    \label{pic_gen_all}
\end{figure}

\noindent\textit{Note:} Slight visual artifacts (e.g., overall darkening in PQRAF at $T = 200$) are caused by MATLAB’s \texttt{imshow} function, which adjusts dynamic range based on non-converged pixel blocks. Once convergence is reached across the full image, rendering returns to normal.

\subsection{Comprehensive Comparison on Color Images}
In the previous sections, we presented experimental comparisons of (1) Wirtinger Flow vs. Amplitude Flow methods in Figure~\ref{QWFsvsQAFs}, (2) different variants within the QRAF family in Figure~\ref{PQRAFs}, and (3) real vs. quaternion-based algorithms in Figure~\ref{QRAFvsRAF}. To provide a more comprehensive evaluation similar to \cite{GUO2025110986}, we now include an overall comparison of all quaternion-based algorithms on a diverse set of standard color images.
\begin{figure*}[htbp]
	\centering
	\subfloat[]{\includegraphics[width=1.2in]{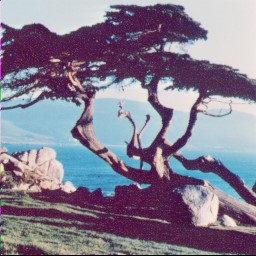}}\,%
	\subfloat[]{\includegraphics[width=1.2in]{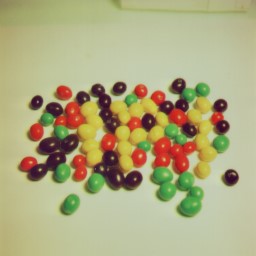}}\,%
	\subfloat[]{\includegraphics[width=1.2in]{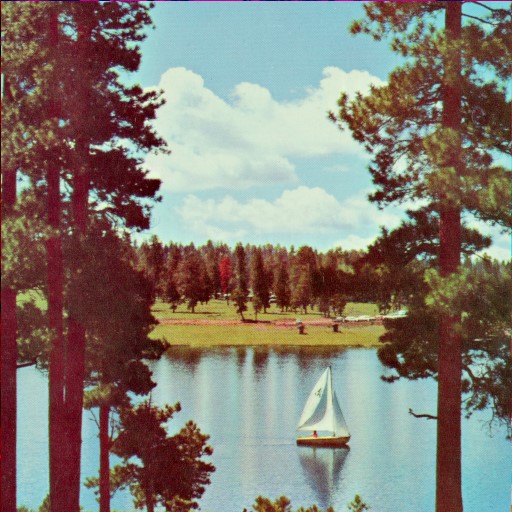}}\,%
	\subfloat[]{\includegraphics[width=1.2in]{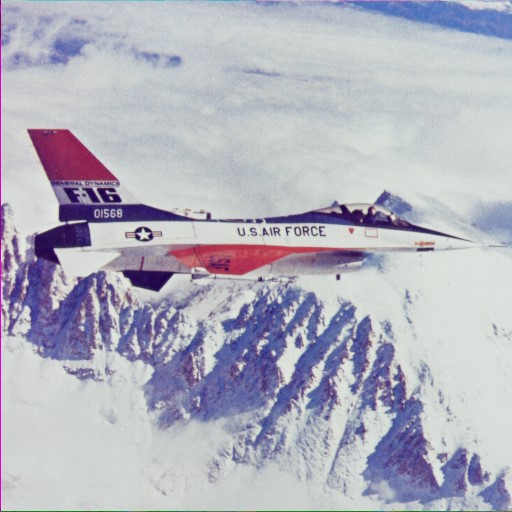}}\,%

    \subfloat[]{\includegraphics[width=1.2in]{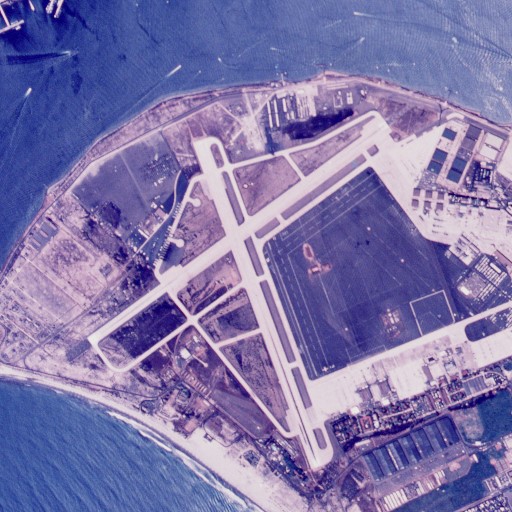}}\,%
	\subfloat[]{\includegraphics[width=1.2in]{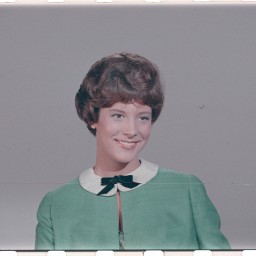}}\,%
	\subfloat[]{\includegraphics[width=1.2in]{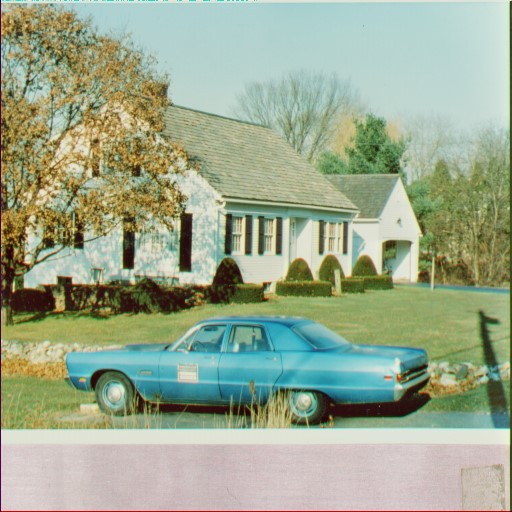}}\,%
	\subfloat[]{\includegraphics[width=1.2in]{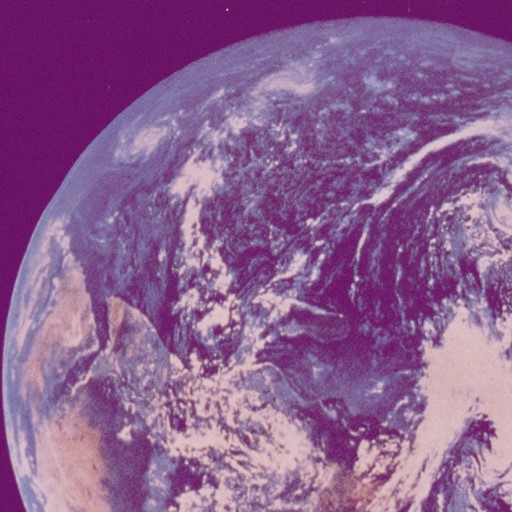}}\,%
	\caption{Color image set}
	\label{Q_algs_all}
\end{figure*}

The test image set, shown in Figure~\ref{Q_algs_all}, is selected from the USC-SIPI Image Database and includes a variety of textures and structures. We apply each algorithm to all test images and report the reconstruction results using two widely accepted performance metrics: Peak Signal-to-Noise Ratio (PSNR) and Structural Similarity Index Measure (SSIM). The quantitative results are summarized in Table~\ref{Table_full_comp}.

\begin{table*}[htbp]
	\caption{Comparison of algorithms in color image set by (PSNR, SSIM) }%标题
	\centering%把表居中
	\resizebox{\textwidth}{!}{\begin{tabular}{|l|c|c|c|c|c|c|c|c|}
		\hline
		Algorithm & $a$					& $b$ 		& $c$ 		&$d$ 		& $e$ 	& $f$	& $g$ & $h$ \\
		\hline
		PQWF &(15,0.68) 	&(11,0.34)		&(17,0.72)	&(11,0.30)		&(17,0.74) &(13,0.19)	&(14,0.52) &(18,0.77)\\
		
		PQRWF& (25,0.98)					&(29,0.98)			&(34,0.99)			&(28,0.97)	&(32,0.99) 	& (28,0.96) 	& (30,0.98)&(32,0.99) \\
		
		PQTAF &(23,0.92)					&(20,0.88)			&(24,0.93) 			&(20,0.81) 	&(24,0.94) 	&(22,0.79)&(22,0.89)&(23,0.93) \\
        
		PQPAF &(71,1) 	&(35,1) 	& (61,1)	&(38,1) 	&(39,1) &(38,1) &(45,1)&(45,1)\\
		
		PQRAF& (90,1)				&(49,1)			&(67,1)			&(42,1)	&(50,1)		&(67,1) &(48,1)& (46,1)\\
		
		PQIRAF &(74,1) 						&(57,1)			&(74,1)			&(59,1)			&(89,1)		&(74,1)	&(64,1) &(47,1)\\

         PQARAF &\textbf{(274,1)} 	&\textbf{(274,1)} 	& \textbf{(209,1)}	&\textbf{(255,1)} 	&\textbf{(269,1)} &\textbf{(202,1)} & \textbf{(277,1)}&\textbf{(292,1)}\\

        PQAdRAF &(137,1) 	&(129,1) 	& (78,1)	&(66,1) 	&(101,1) &(124,1) &(78,1)&(64,1)\\
		\hline
	\end{tabular}}
	
	\label{Table_full_comp}
\end{table*}

As shown in Table~\ref{Table_full_comp}, QRAF and its variants demonstrate superior overall performance compared to other quaternion-based algorithms. In particular, PQARAF and PQAdRAF consistently achieve the highest scores in both PSNR and SSIM across most test images. These results are well aligned with the convergence behavior observed in earlier experiments, further reinforcing the effectiveness and robustness of our proposed methods.

\medskip

In conclusion, this extended experiment provides both quantitative and visual evidence of the strong convergence properties of our QRAF family. The results demonstrate not only their effectiveness in reconstructing high-quality color images but also their efficiency in achieving rapid convergence. These findings reinforce the practical applicability and robustness of our proposed methods.
\end{document}